\newtheorem{theorem}{Theorem}
\newtheorem{corollary}[theorem]{Corollary}
\newtheorem{conjecture}[theorem]{Conjecture}
\newtheorem{definition}[theorem]{Definition}
\newtheorem{proposition}[theorem]{Proposition}
\newcommand{\bdmath}{\begin{dmath}}
\newcommand{\edmath}{\end{dmath}}
\newcommand{\beq}{\begin{equation}}
\newcommand{\eeq}{\end{equation}}
\newcommand{\bdm}{\begin{displaymath}}
\newcommand{\edm}{\end{displaymath}}
\newcommand{\bea}{\begin{eqnarray}}
\newcommand{\eea}{\end{eqnarray}}
\newcommand{\beal}{\beq \begin{array}{ll}}
\newcommand{\eeal}{\end{array} \eeq}
\newcommand{\beas}{\begin{eqnarray*}}
\newcommand{\eeas}{\end{eqnarray*}}
\newcommand{\ba}{\begin{array}}
\newcommand{\ea}{\end{array}}
\newcommand{\bit}{\begin{itemize}}
\newcommand{\eit}{\end{itemize}}
\newcommand{\ben}{\begin{enumerate}}
\newcommand{\een}{\end{enumerate}}
\newcommand{\calA}{{\cal A}}
\newcommand{\calE}{{\cal E}}
\newcommand{\calF}{{\cal F}}
\newcommand{\calI}{{\cal I}}
\newcommand{\calN}{{\cal N}}
\newcommand{\calR}{{\cal R}}
\newcommand{\calX}{{\cal X}}
\newcommand{\etal}{\emph{et~al.}\xspace}
\newcommand{\M}[1]{{\bm #1}} 
\renewcommand{\boldsymbol}[1]{{\bm #1}}
\newcommand{\hide}[1]{}
\newcommand{\wrt}{w.r.t.\xspace}
\newcommand{\hiddenText}{{\color{gray} hidden text.}}
\newcommand{\hideWithText}[1]{\hiddenText}
\DeclareMathOperator*{\argmin}{arg\,min}
\newcommand{\tran}{^{\mathsf{T}}}
\newcommand{\trace}[1]{\mathrm{tr}\left(#1\right)}
\newcommand{\zero}{{\mathbf 0}}
\newcommand{\eye}{{\mathbf I}}
\newcommand{\Real}[1]{ { {\mathbb R}^{#1} } }
\newcommand{\SOthree}{\ensuremath{\mathrm{SO}(3)}\xspace}
\newcommand{\MA}{\M{A}}
\newcommand{\MJ}{\M{J}}
\newcommand{\MR}{\M{R}}
\newcommand{\va}{\boldsymbol{a}}
\newcommand{\vf}{\boldsymbol{f}}
\newcommand{\vr}{\boldsymbol{r}}
\newcommand{\vs}{\boldsymbol{s}}
\newcommand{\vv}{\boldsymbol{v}}
\newcommand{\vt}{\boldsymbol{t}}
\newcommand{\vxx}{\boldsymbol{x}} 
\newcommand{\vy}{\boldsymbol{y}}
\newcommand{\valpha}{\boldsymbol{\alpha}}
\newcommand{\vepsilon}{\boldsymbol{\epsilon}}
\newcommand{\vtau}{\boldsymbol{\tau}}
\newcommand{\blue}[1]{{\color{blue}#1}}
\newcommand{\red}[1]{{\color{red}#1}}
\newcommand{\linkToPdf}[1]{\href{#1}{\blue{(pdf)}}}
\newcommand{\linkToPpt}[1]{\href{#1}{\blue{(ppt)}}}
\newcommand{\linkToCode}[1]{\href{#1}{\blue{(code)}}}
\newcommand{\linkToWeb}[1]{\href{#1}{\blue{(web)}}}
\newcommand{\linkToVideo}[1]{\href{#1}{\blue{(video)}}}
\newcommand{\linkToMedia}[1]{\href{#1}{\blue{(media)}}}
\newcommand{\award}[1]{\xspace} 
\newcommand{\calY}{\mathcal{Y}}
\newcommand{\eg}{\emph{e.g.,}\xspace}
\newcommand{\ie}{\emph{i.e.,}\xspace}
\newcommand{\hatmap}[1]{\left[ #1 \right]_{\times}}
\newcommand{\half}{\frac{1}{2}}
\newcommand{\vectorize}[1]{\text{vec}\left(#1 \right)}
\newcommand{\xcom}{\bar{\vxx}}
\newcommand{\dxcom}{\dot{\xcom}}
\newcommand{\xcomv}{\bar{\vv}}
\newcommand{\xcoma}{\bar{\va}}
\newcommand{\xangv}{\boldsymbol{\omega}}
\newcommand{\xanga}{\valpha}
\newcommand{\sumallpoints}{\sum_{i=1}^N}
\newcommand{\xref}{\tilde{\vxx}}
\newcommand{\dMR}{\dot{\MR}}
\newcommand{\dV}{\dot{V}}
\newcommand{\dvs}{\dot{\vs}}
\newcommand{\bmat}{\left[ \begin{array}}
\newcommand{\emat}{\end{array}\right]}
\newcommand{\MRgt}{\MR^\circ}
\newcommand{\vtgt}{\vt^\circ}
\title{A Dynamical Perspective on Point Cloud Registration}
\author{%
  Heng Yang\\
  Laboratory for Information and Decision Systems (LIDS) \\
  Massachusetts Institute of Technology \\
  \texttt{hankyang@mit.edu}
}
\begin{document}
\maketitle


\begin{abstract}
  We provide a dynamical perspective on the classical problem of 3D \emph{point cloud registration with correspondences}. A point cloud is considered as a \emph{rigid body consisting of particles}. The problem of registering two point clouds is formulated as a dynamical system, where the \emph{dynamic} model point cloud \emph{translates and rotates} in a \emph{viscous} environment towards the \emph{static} scene point cloud, under forces and torques induced by \emph{virtual springs} placed between each pair of corresponding points. We first show that the \emph{potential energy} of the system recovers the \emph{objective function} of the \emph{maximum likelihood estimation}. We then adopt Lyapunov analysis, particularly the \emph{invariant set theorem}, to analyze the rigid body dynamics and show that the system globally asymptotically tends towards the \emph{set of equilibrium points}, where the \emph{globally optimal} registration solution lies in. We conjecture that, besides the globally optimal equilibrium point, the system has either \emph{three} or \emph{infinite} ``spurious'' equilibrium points, and these spurious equilibria are all \emph{locally unstable}. The case of three spurious equilibria corresponds to \emph{generic} shape of the point cloud, while the case of infinite spurious equilibria happens when the point cloud exhibits \emph{symmetry}. Therefore, simulating the dynamics with random perturbations guarantees to obtain the globally optimal registration solution. Numerical experiments support our analysis and conjecture. 
\end{abstract}

\section{Introduction}
\label{sec:intro}
\emph{Point cloud registration}, which seeks the best 3D rigid transformation,~\ie~rotation and translation, to align two point clouds, is a fundamental problem in robotics and computer vision and finds applications in motion estimation and 
3D reconstruction~\cite{Henry12ijrr-rgbdMapping,Blais95pami-registration,Choi15cvpr-robustReconstruction,Zhang15icra-vloam,Yang20arxiv-teaser},
object recognition and localization~\cite{Drost10cvpr,Wong17iros-segicp,Zeng17icra-amazonChallenge,Marion18icra-labelFusion,Yang19rss-teaser}, 
panorama stitching~\cite{Bazin14eccv-robustRelRot,Yang19iccv-QUASAR},  
and medical imaging~\cite{Audette00mia-surveyMedical,Tam13tvcg-registrationSurvey}.

We revisit the simplest setup where point-to-point correspondences are \emph{known and correct}. Formally, let $\calX = \{ \vxx_i \}_{i=1}^N$ and $\calY = \{ \vy_i \}_{i=1}^N,\vxx_i,\vy_i \in \Real{3}$, be two point clouds with $N$ correspondences $\vxx_i \leftrightarrow \vy_i$ (\eg~obtained from hand-crafted~\cite{Rusu09icra-fast3Dkeypoints} or deep-learned feature matching~\cite{choy19iccv-FCGF}), then it is well known that, if the noise follows isotropic Gaussian distribution, the \emph{maximum likelihood estimator} (MLE) for the rotation $\MR \in \SOthree$\footnote{$\SOthree \doteq \{ \MR \in \Real{3\times 3}: \MR\tran\MR = \MR\MR\tran = \eye_3, \det(\MR) = +1 \}$, where $\eye_d$ is the identity matrix of size $d$.} and translation $\vt \in \Real{3}$ is given by:
\bea
\MR^\star, \vt^\star = \argmin_{\MR \in \SOthree, \vt \in \Real{3}} \sum_{i=1}^N \frac{1}{\sigma_i^2} \left\| \vy_i - \MR \vxx_i - \vt  \right\|^2, \label{eq:registrationOutlier-free}
\eea
where $\sigma_i$ is the standard deviation of the Gaussian noise associated with the $i$-th correspondence. Despite the nonconvexity of problem~\eqref{eq:registrationOutlier-free} ($\SOthree$ is a nonconvex set), its \emph{globally optimal} solution can be computed in closed form due to Horn~\cite{Horn87josa} and Arun~\etal~\cite{Arun87pami}, by first computing the optimal rotation using singular value decomposition (SVD) and then computing the translation analytically. Although problem~\eqref{eq:registrationOutlier-free} requires all correspondences to be correct, it is the most fundamental building block for robust estimation frameworks such as RANSAC~\cite{Fischler81}, M-estimation~\cite{Zhou16eccv-fastGlobalRegistration} and graduated non-convexity~\cite{Yang20ral-GNC} in the presence of outlier correspondences. In addition, when no correspondences are given, the \emph{Iterative Closest Point} (ICP) algorithm alternates in finding correspondences and updating the transformation from the solution of problem~\eqref{eq:registrationOutlier-free}. 

Recently, physics-based registration~\cite{Golyanik16CVPR-gravitationalRegistration,Jauer18PAMI-physicsBasedRegistration,Golyanik19ICCV-acceleratedGravitational} has become a popular alternative to optimization-based registration. By creating \emph{virtual forces} such as gravitational force~\cite{Golyanik16CVPR-gravitationalRegistration,Golyanik19ICCV-acceleratedGravitational} and electrostatic force~\cite{Jauer18PAMI-physicsBasedRegistration} between two point clouds, one of which is static and the other is dynamically moving (rotating and translating) under the virtual forces as a \emph{rigid body of particles}, point cloud registration can be solved from \emph{simulating the differential equations derived from rigid body dynamics}. The appealing advantage of solving registration using simulation lies in its computational efficiency: the simulation of dynamics can be efficiently performed using GPUs with rich literature on implementation and acceleration~\cite{Golyanik19ICCV-acceleratedGravitational}. Nevertheless, little has been analyzed about the dynamics of the moving point cloud under virtual forces, and the convergence properties of simulation-based methods are unknown. In fact, simulation-based techniques are also prone to getting stuck at local minima and hence good initialization is required~\cite{Jauer18PAMI-physicsBasedRegistration}.

{\bf Contributions.} In this paper, we focus on providing a theoretical understanding of the dynamical system arising from point cloud registration and its connection to optimization-based registration. Our first contribution is, instead of creating virtual gravitational and electrostatic forces as in previous works~\cite{Golyanik16CVPR-gravitationalRegistration,Jauer18PAMI-physicsBasedRegistration}, we place \emph{virtual springs} between each pair of corresponding points with the spring coefficient proportional to the inverse of the square of the standard deviation of the noise (\ie~$1/\sigma_i^2$ in~\eqref{eq:registrationOutlier-free}). Under this construction, the \emph{potential energy} of the dynamical system exactly recovers the objective function of problem~\eqref{eq:registrationOutlier-free}. Our second contribution is to leverage Lyaponov theory, in particular the \emph{invariant set theorem}~\cite{Slotine91book-appliedNonlinearControl}, to analyze the stability of this system. We show that the system, starting from any initial conditions, will tend to the set of \emph{equilibrium points}, which must contain the globally optimal solution of problem~\eqref{eq:registrationOutlier-free}. This result is not enough to guarantee that \emph{simulating the dynamics} will converge to the globally optimal solution. Therefore, our third contribution is to conjecture and numerically show that, besides the equilibrium point corresponding to the optimal solution of problem~\eqref{eq:registrationOutlier-free}, the system contains either \emph{three} or \emph{infinite} ``spurious'' equilibrium points, and all spurious equilibria are \emph{locally unstable}. When the point cloud has \emph{generic} shape and noise distribution, the dynamical system has three spurious equilibria, while the case of infinite spurious equilibria happens when the point cloud exhibits \emph{symmetry}. The conjecture suggests that a small perturbation is sufficient for the system to escape these spurious ``suboptimal'' equilibria. In fact, our simulation of the dynamics demonstrates that, even \emph{without} perturbation, the system \emph{always} converges to the globally optimal solution of problem~\eqref{eq:registrationOutlier-free}, supporting our conjecture. Moreover, numerical experiments suggest that the rotations corresponding to the three locally unstable equilibrium points are related to the globally optimal rotation via an additional $180^\circ$ rotation. Although our analysis only focuses on point cloud registration in its simplest form~\eqref{eq:registrationOutlier-free}, this is the first theoretical understanding of physics-based registration and we hope future work can be built upon this. 

The rest of the paper is organized as follows. Section~\ref{sec:dynamicalConstruction} describes the construction of a dynamical system for point cloud registration and the expression for the rigid body dynamics. Section~\ref{sec:stabilityAnalysis} presents the stability analysis of the dynamical system using Lyapunov theory. Section~\ref{sec:experiments} provides numerical experiments supporting our analysis. Section~\ref{sec:conclusions} concludes the paper. Due to the focus of this paper, we omit a section dedicated to related work, and point the interested readers to~\cite{Yang20arxiv-teaser,Jauer18PAMI-physicsBasedRegistration} for thorough reviews of related work on point cloud registration.

\section{Dynamical Construction for Point Cloud Registration}
\label{sec:dynamicalConstruction}
We consider each point cloud $\calX$ and $\calY$ as a collection of $N$ particles each with mass $m_i=1/\sigma_i^2$ (red and blue solid circles in Fig.~\ref{fig:alignment-dynamics}), and the particles are mutually rigidly connected via massless links (red and blue lines in Fig.~\ref{fig:alignment-dynamics}). We are given the coordinates of the particles $\vxx_i,\vy_i$ at $t=0$ (with slight abuse of notation, we also use $\vxx_i$ and $\vy_i$ to refer to the particles). The center of mass (CM) of $\calY$ is located at $\bar{\vy} = \sum_{i=1}^N m_i \vy_i / M$, and the CM of $\calX$ is located at $\xcom = \sum_{i=1}^N m_i\vxx_i / M$, where $M \doteq \sum_{i=1}^N m_i$ is the total mass of $\calY$ (also the total mass of $\calX$). Without loss of generality, we assume $\bar{\vy} = \zero$ and $\calY$ is already centered at zero.\footnote{Otherwise, we can shift every point by $\bar{\vy}$: $\vy_i \leftarrow \vy_i - \bar{\vy},\vxx_i \leftarrow \vxx_i - \bar{\vy}$.} As shown in Fig.~\ref{fig:alignment-dynamics}, we then attach a \emph{fixed} global coordinate frame $Y$ to the CM of $\calY$, and attach a moving coordinate frame $X$ to the CM of $\calX$. Point cloud $\calY$ is static for all $t\geq 0$ and point cloud $\calX$ is allowed to move. 

{\bf Virtual springs and damping.} We place a virtual spring with coefficient $k_i \doteq 2 / \sigma_i^2$ between each pair of corresponding particles $\vxx_i$ and $\vy_i$, and we assume that the particles are subject to \emph{viscous damping} with constant coefficient $\mu > 0$. Under spring forces and viscous damping, $\calX$ will undergo rigid body dynamics and moves in the virtual environment.

{\bf State space of the dynamical system.} We use the following states to describe the dynamical system (blue symbols in Fig.~\ref{fig:alignment-dynamics}): (i) $\xcom^Y(t) \in \Real{3}$, the position of the CM of $\calX$ in the coordinate frame $Y$; (ii) $\MR(t) \doteq \MR_X^Y (t) \in \SOthree$, the relative orientation between frame $X$ and frame $Y$; (iii) $\xcomv^Y(t) \in \Real{3}$, the velocity of the CM of $\calX$ in frame $Y$; (iv) $\xangv^X(t) \in \Real{3}$, the angular velocity of $\calX$ \wrt its CM in its body frame $X$. In order to describe the position and velocity of each particle $\vxx_i$, we use $\xref_i \doteq \vxx_i^X - \xcom^X = \vxx_i - \xcom$ to denote the relative position of each particle \wrt the CM in frame $X$, which is constant for all $t \geq 0$ by definition. Then the position of each particle $\vxx_i$ in the frame $Y$ can be expressed as:
\bea
\vxx_i^Y(t) = \MR(t) \xref_i + \xcom^Y(t). \label{eq:positioninY}
\eea
With this construction, we consider the following initial condition for the dynamical system:
\bea
\vxx_i^Y(0) = \vxx_i, \quad \xcom^Y(0) = \xcom = \frac{\sumallpoints m_i \vxx_i}{M}, \quad \xcomv^Y(0) = \zero, \quad \MR(0) = \eye, \quad \xangv^X(0) = \zero,
\eea
which means that point cloud $\calX$ is at rest position and frame $X$ is axis-aligned with frame $Y$ (with an offset in the location of the CM).

{\bf Forces and torques.} We now derive the total force and torque applied on $\calX$ induced by the virtual springs and damping. The force acting on each point $\vxx_i$, expressed in frame $Y$, is:
\bea
\vf_i^Y = k_i (\vy_i - \vxx_i^Y) - \mu m_i \vv_i^Y = k_i (\vy_i - \MR \xref_i - \xcom^Y) - \mu m_i (\MR(\xangv^X \times \xref_i) + \xcomv^Y), \label{eq:forcepointiY}
\eea
where we have assumed the damping force is also proportional to the mass $m_i$ of the particle.\footnote{Under the assumption that the particles have equal density, then larger mass implies larger volume, and hence larger damping force.} Hence the total force is:
\bea
\vf^Y = \sumallpoints \vf_i^Y = \sumallpoints k_i \left( \vy_i - \MR \xref_i - \xcom^Y  \right) - \mu M \xcomv^Y, \label{eq:totalforce}
\eea
where we have used:
\bea
\sumallpoints \mu m_i \MR (\xangv^X \times \xref_i ) = \sumallpoints \mu \MR \xangv^X \times (m_i \xref_i) = \mu\MR \xangv^X \times \left( \sumallpoints m_i \xref_i \right) = \zero. \label{eq:vanishCM1}
\eea
due to the definition of $\xref_i = \vxx_i - \xcom$. 
From the expression of $\vf_i^Y$ in eq.~\eqref{eq:forcepointiY}, we can find the expression of $\vf_i^X$:
\bea
\vf_i^X = \MR\tran \vf_i^Y = k_i \left( \MR\tran \vy_i - \xref_i - \MR\tran \xcom^Y \right) - \mu m_i \left( \xangv^X \times \xref_i + \MR\tran \xcomv^Y \right).
\eea
Hence, we can compute the total torque:
\bea
\vtau^X = \sumallpoints \xref_i \times \vf_i^X = \sumallpoints \left( k_i \hatmap{\xref_i} \MR\tran \left( \vy_i - \xcom^Y \right) - \mu m_i \hatmap{\xref_i}\hatmap{\xangv^X}\xref_i \right), \label{eq:totaltorque}
\eea
where we have used:
\bea
\sumallpoints \mu m_i \xref_i \times (\MR\tran \xcomv^Y) = \mu \left( \sumallpoints m_i \xref_i \right) \times (\MR\tran \xcomv^Y) = \zero,
\eea
for the same reason as~\eqref{eq:vanishCM1}. In the expression~\eqref{eq:totaltorque}, the linear map $\hatmap{\vxx}$ maps $\vxx \in \Real{3}$ to the following skew-symmetric matrix:
\bea
\hatmap{\vxx} = \bmat{ccc}
0 & -x_3 & x_2 \\
x_3 & 0 & - x_1 \\
- x_2 & x_1 & 0
\emat,
\eea
and we have the equality that $\vxx \times \vy = \hatmap{\vxx} \vy$ for any $\vy \in \Real{3}$. 

{\bf Dynamics of the system.} Now we are ready to introduce the dynamics of the system.
\begin{proposition}[Dynamics of Point Cloud Registration]\label{prop:dynamics}
Using $\left(\xcom^Y,\MR,\xcomv^Y,\xangv^X \right)$ as the state space of $\calX$, the differential equations that describe the dynamics of $\calX$ are:
\bea \label{eq:dynamics}
\begin{cases}
\dxcom^Y = \xcomv^Y\\
\dMR = \MR \hatmap{\xangv^X} \\
M \xcoma^Y = \vf^Y \\
\MJ \xanga^X = \vtau^X - \xangv^X \times \MJ \xangv^X
\end{cases},
\eea
where $\xcoma^Y \doteq \dot{\xcomv}^Y$ is the linear acceleration of $\calX$ in frame $Y$, $\xanga^X \doteq \dot{\xangv}^X$ is the angular acceleration of $\calX$ in frame $X$, $\MJ = - \sumallpoints m_i \hatmap{\xref_i}^2 \in \Real{3 \times 3}$ is the inertia matrix of $\calX$ \wrt its CM, and $\vf^Y,\vtau^X$ are the total force and torque in~\eqref{eq:totalforce} and~\eqref{eq:totaltorque}. Further, let $\vr = \vectorize{\MR} \in \Real{9}$, and use $\vs = [\xcom^Y; \vr; \xcomv^Y; \xangv^X ] \in \Real{18}$ to denote the vector of states, we also write $\dvs = \calF(\vs)$ to denote the nonlinear dynamics in~\eqref{eq:dynamics}.
\end{proposition}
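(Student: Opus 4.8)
The plan is to establish the standard Newton--Euler equations of rigid body motion specialized to this system; the four equations in~\eqref{eq:dynamics} split naturally into two kinematic relations and two dynamic balance laws. First I would dispatch the kinematic relations, which require no force information. The equation $\dxcom^Y = \xcomv^Y$ is immediate from the definition $\xcomv^Y \doteq \dot{\xcom}^Y$. For $\dMR = \MR \hatmap{\xangv^X}$, I would differentiate the orthogonality constraint $\MR\tran \MR = \eye$ to obtain $\dMR\tran \MR + \MR\tran \dMR = \zero$, so that $\MR\tran \dMR$ is skew-symmetric and can be written as $\hatmap{\xangv^X}$ for a unique vector $\xangv^X$, which by construction is the angular velocity of $\calX$ expressed in its body frame $X$. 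Left-multiplying by $\MR$ yields the claim.

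Next I would treat the translational dynamics by invoking the center-of-mass theorem. Differentiating $\xcom^Y = \sumallpoints m_i \vxx_i^Y / M$ twice in time and noting that the internal constraint forces transmitted through the massless rigid links cancel pairwise by Newton's third law, only the external spring and damping forces survive in the momentum balance. Their sum is exactly the total force $\vf^Y$ computed in~\eqref{eq:totalforce}, so that $M \xcoma^Y = \vf^Y$, since $M$ is constant. Here the vanishing identity~\eqref{eq:vanishCM1}, which relies on $\sumallpoints m_i \xref_i = \zero$ from the definition of the CM, is what guarantees the rotational and angular-velocity contributions do not leak into the translational equation.

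The rotational dynamics is the crux of the proof. I would first show that the angular momentum of $\calX$ about its CM, expressed in the body frame, is $\vh^X = \sumallpoints m_i \, \xref_i \times (\xangv^X \times \xref_i) = -\sumallpoints m_i \hatmap{\xref_i}^2 \xangv^X = \MJ \xangv^X$, which simultaneously identifies $\MJ = -\sumallpoints m_i \hatmap{\xref_i}^2$ as the inertia matrix. Crucially, $\MJ$ is \emph{constant} precisely because each $\xref_i$ is fixed in the body frame. I would then apply the angular-momentum theorem in the inertial frame, $\dot{\vh}^Y = \vtau^Y$, and transport it to the body frame using $\vh^Y = \MR \vh^X$ together with $\dMR = \MR \hatmap{\xangv^X}$: differentiating gives $\dot{\vh}^Y = \MR(\dot{\vh}^X + \xangv^X \times \vh^X)$, while $\vtau^Y = \MR \vtau^X$. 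Canceling $\MR$ and substituting $\vh^X = \MJ \xangv^X$ (with $\dot{\vh}^X = \MJ \xanga^X$) yields $\MJ \xanga^X = \vtau^X - \xangv^X \times \MJ \xangv^X$, where $\vtau^X$ is the total torque already computed in~\eqref{eq:totaltorque} and therefore already incorporates both the spring and the viscous-damping contributions.

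I expect the rotational equation to be the main obstacle, specifically the transport step that produces the gyroscopic term $\xangv^X \times \MJ \xangv^X$. The delicate point is that the inertia matrix is constant only in the body frame, so the angular-momentum balance must be written in the body frame and the frame rotation $\dMR = \MR\hatmap{\xangv^X}$ carefully accounted for; this frame-dependence is exactly what generates the extra term absent from a naive $\MJ \xanga^X = \vtau^X$. The remaining pieces (the two kinematic relations and the translational law) are essentially bookkeeping once the cancellation identities based on $\sumallpoints m_i \xref_i = \zero$ are in hand.
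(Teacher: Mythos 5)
Your proposal is correct and takes essentially the same route as the paper, which simply asserts the result as ``straightforward from Newton--Euler equations'' without writing out a derivation: your kinematic argument for $\dMR = \MR\hatmap{\xangv^X}$, the center-of-mass theorem for $M\xcoma^Y = \vf^Y$, and the body-frame angular-momentum balance producing the gyroscopic term $\xangv^X \times \MJ\xangv^X$ are exactly the standard Newton--Euler derivation the paper is invoking. Your version correctly supplies the details the paper leaves to the cited reference, including the identification $\vh^X = -\sumallpoints m_i \hatmap{\xref_i}^2\,\xangv^X = \MJ\xangv^X$ and the role of $\sumallpoints m_i \xref_i = \zero$ in decoupling the translational and rotational equations.
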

Proposition~\ref{prop:dynamics} is straightforward from Newton-Euler equations for rigid body dynamics~\cite{Asada86book-robotAnalysis}. We note that the inertia matrix $\MJ$ is symmetric positive definite and invertible when there are at least three noncollinear points in $\calX$.


\begin{figure}[t]
\centering
\includegraphics[width=0.5\linewidth]{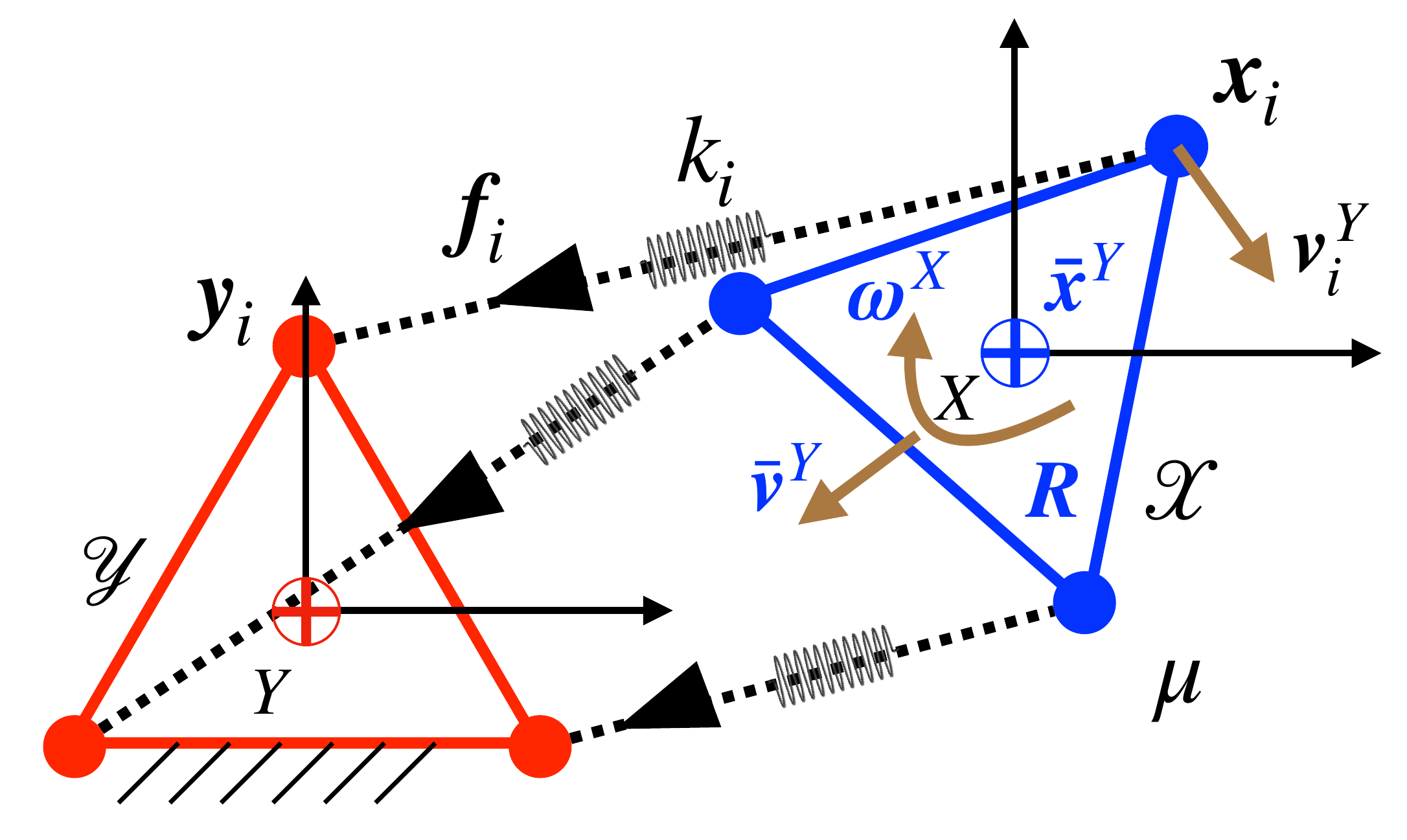}
\caption{Point cloud registration as a dynamical system with virtual spring forces. By placing a virtual spring with coefficient $k_i = 2 / \sigma_i^2$ between each pair of points $(\vxx_i,\vy_i)$, the static scene $\calY$ (\red{red}) attracts the moving model $\calX$ (\blue{blue}) to optimal alignment via virtual forces and torques. The energy of the system is dissipated through viscous damping with constant coefficient $\mu$.}
\label{fig:alignment-dynamics}
\end{figure}

\section{Stability Analysis}
\label{sec:stabilityAnalysis}
In this section, we apply Lyapunov theory, in particular the \emph{invariant set theorem}, to analyze the stability of the nonlinear dynamics $\dvs = \calF(\vs)$ in~\eqref{eq:dynamics}. We first introduce some preliminaries. 

\begin{definition}[Invariant Set~\cite{Slotine91book-appliedNonlinearControl}]\label{def:invariantSet} A set $\calI$ is an invariant set for a dynamic system if every system trajectory which starts from a point in $\calI$ remains in $\calI$ for all future time.
\end{definition}

A first trivial example for an invariant set is the entire state space, and a second trivial example for an invariant set is an \emph{equilibrium point},~\ie~a point at which $\dvs = \zero$. The following theorem establishes when the dynamical system will converge to an invariant set. 

\begin{theorem}[Global Invariant Set Theorem~\cite{Slotine91book-appliedNonlinearControl}]\label{thm:globalinvariant} 
Consider the dynamical system $\dvs = \calF(\vs)$, with $\calF$ continuous, and let $V(\vs)$ be a scalar function with continuous first partial derivatives. Assume that (i) $V(\vs) \rightarrow \infty$ as $\| \vs \| \rightarrow \infty$; (ii) $\dV(\vs) \leq 0$ over the entire state space. Let $\calR$ be the set of all points where $\dV(\vs) = 0$, and $\calI$ be the largest invariant set in $\calR$. Then all trajectories of the system globally asymptotically converge to $\calI$ as $t \rightarrow \infty$.
\end{theorem}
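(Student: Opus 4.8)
The plan is to reproduce the standard argument underlying LaSalle's invariance principle, organized around the \emph{positive limit set} of a bounded trajectory. First I would fix an arbitrary initial condition $\vs_0$ and use hypothesis (ii): since $\frac{d}{dt} V(\vs(t)) = \dV(\vs(t)) \le 0$, the function $V$ is nonincreasing along the trajectory, so $V(\vs(t)) \le V(\vs_0) =: c$ for all $t \ge 0$. By the radial unboundedness assumption (i), the sublevel set $\Omega_c \doteq \{ \vs : V(\vs) \le c \}$ is bounded, and it is closed because $V$ is continuous; hence $\Omega_c$ is compact and the entire forward trajectory is confined to it. This confinement is the essential role of assumption (i): without it the limit set could be empty and no convergence statement would follow.

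Next I would establish convergence of $V$ along the trajectory. Since $V$ is continuous on the compact set $\Omega_c$ it is bounded below there, and a nonincreasing function of $t$ that is bounded below converges; thus $V(\vs(t)) \to V_\infty$ for some finite $V_\infty$ as $t \to \infty$. I then introduce the positive limit set $L^+$, consisting of all points $\vp$ for which there exists a time sequence $t_n \to \infty$ with $\vs(t_n) \to \vp$. Because the trajectory lives in the compact set $\Omega_c$, the standard theory of autonomous systems yields that $L^+$ is nonempty, compact, and invariant, and that the trajectory approaches it, i.e. $\dist(\vs(t), L^+) \to 0$ as $t \to \infty$.

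The decisive step links $L^+$ to $\calR$. By continuity of $V$ together with the convergence $V(\vs(t)) \to V_\infty$, the function $V$ is constant and equal to $V_\infty$ on $L^+$: for any $\vp \in L^+$, picking $t_n \to \infty$ with $\vs(t_n) \to \vp$ and passing to the limit in $V(\vs(t_n)) = V_\infty$ gives $V(\vp) = V_\infty$. Because $L^+$ is invariant, any solution initialized in $L^+$ stays in $L^+$, where $V$ is constant, so its time derivative $\dV$ vanishes identically; hence $L^+ \subseteq \calR$. As $L^+$ is an invariant subset of $\calR$ and $\calI$ is by definition the largest invariant set in $\calR$, we obtain $L^+ \subseteq \calI$. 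Combining $\dist(\vs(t), L^+) \to 0$ with $L^+ \subseteq \calI$ yields $\dist(\vs(t), \calI) \to 0$, and since $\vs_0$ was arbitrary this proves that every trajectory globally asymptotically converges to $\calI$.

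The main obstacle is justifying the properties of the positive limit set rigorously, because the theorem assumes only that $\calF$ is \emph{continuous}. Nonemptiness and compactness of $L^+$ follow cleanly from confinement to the compact $\Omega_c$, but the \emph{invariance} of $L^+$ is the delicate point: the cleanest route presumes enough regularity (for instance uniqueness of solutions, as guaranteed by a local Lipschitz condition on $\calF$) so that the flow is well defined and continuous dependence on initial data can be used to transport trajectory segments to the limit. I would either invoke this invariance as part of the standard dynamical-systems toolkit cited with the theorem, or supply the mild regularity hypothesis that secures it; once $L^+$ is known to be nonempty, compact, invariant, and attracting, the remaining steps above are routine.
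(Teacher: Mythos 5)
The paper does not prove this theorem at all: it is quoted verbatim from Slotine and Li's textbook and used as a black box, so there is no in-paper argument to compare against. Your proposal is the standard and correct proof of LaSalle's global invariance principle --- confinement to the compact sublevel set $\Omega_c$ via radial unboundedness, monotone convergence of $V$, constancy of $V$ on the positive limit set $L^+$, and the chain $L^+ \subseteq \calR$, $L^+ \subseteq \calI$ --- and it is exactly the argument the cited reference relies on. Your closing caveat is well spotted and is the only real subtlety: with $\calF$ merely continuous, solutions exist (Peano) but need not be unique, and invariance of $L^+$ as usually proved uses uniqueness and continuous dependence on initial conditions; the textbook statement tacitly assumes this regularity, and in the paper's application it is harmless since the registration dynamics $\calF$ in~\eqref{eq:dynamics} are smooth in the state, hence locally Lipschitz.
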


The crux of Theorem~\ref{thm:globalinvariant} is to find a scalar function $V(\vs)$, often referred to as the \emph{Lyapunov function} or the \emph{energy function}, and the set $\dV(\vs) = 0$ defines an interesting set. Interestingly, by choosing $V(\vs)$ to be the total energy of the system, we obtain the following result.

\begin{proposition}[Convergence to Equilibria]\label{prop:convergenceEquilibria}
Let $V(\vs) = V_k(\vs) + V_p(\vs)$ be the total energy of the dynamical system $\dvs = \calF(\vs)$ described in~\eqref{eq:dynamics}, where $V_k(\vs)$ is the total kinetic energy and $V_p(\vs)$ is the total potential energy. Then (i) $V_p(\vs)$ exactly recovers the objective function of problem~\eqref{eq:registrationOutlier-free}; (ii) all trajectories of the system globally asymptotically converge to the set of equilibrium points $\calE = \{\vs: \dvs = \calF(\vs) = \zero\}$, where the total force and torque both vanish, and the origin of $X$ aligns with the origin of $Y$,~\ie~$\xcom^Y = \bar{\vy} = \zero$.
\end{proposition}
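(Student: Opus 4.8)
The plan is to establish part (i) by a direct identification of energies and part (ii) by invoking the Global Invariant Set Theorem (Theorem~\ref{thm:globalinvariant}) with the total energy $V = V_k + V_p$ as the Lyapunov function.

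\textbf{Part (i).} The elastic potential energy stored in the spring connecting $\vxx_i^Y$ and $\vy_i$ is $\half k_i \|\vy_i - \vxx_i^Y\|^2$, so $V_p(\vs) = \sumallpoints \half k_i \|\vy_i - \MR\xref_i - \xcom^Y\|^2$ by~\eqref{eq:positioninY}. Substituting $k_i = 2/\sigma_i^2$ and $\xref_i = \vxx_i - \xcom$ and rewriting $\MR\xref_i + \xcom^Y = \MR\vxx_i + (\xcom^Y - \MR\xcom)$, I would introduce the change of variable $\vt \doteq \xcom^Y - \MR\xcom$, which is a bijection of $\Real{3}$ for each fixed $\MR$. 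This turns $V_p$ into $\sumallpoints \frac{1}{\sigma_i^2}\|\vy_i - \MR\vxx_i - \vt\|^2$, which is exactly the objective of~\eqref{eq:registrationOutlier-free}.

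\textbf{Part (ii), setup.} Take $V_k(\vs) = \half M \|\xcomv^Y\|^2 + \half (\xangv^X)\tran \MJ \xangv^X$ and $V = V_k + V_p$, which has continuous first derivatives. I would first check radial unboundedness: since $\SOthree$ is compact the coordinate $\vr = \vectorize{\MR}$ is always bounded, so $\|\vs\|\to\infty$ forces one of $\xcom^Y,\xcomv^Y,\xangv^X$ to blow up; because $\MJ\succ\zero$ (Proposition~\ref{prop:dynamics}), $V_k\to\infty$ whenever $\|\xcomv^Y\|$ or $\|\xangv^X\|\to\infty$, and a reverse-triangle-inequality bound on each spring term makes $V_p\to\infty$ whenever $\|\xcom^Y\|\to\infty$; hence $V\to\infty$.

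\textbf{Computing $\dV$ (the crux).} Differentiating $V_k$ along~\eqref{eq:dynamics}, substituting $M\xcoma^Y=\vf^Y$ and $\MJ\xanga^X = \vtau^X - \xangv^X\times\MJ\xangv^X$, and using the gyroscopic identity $(\xangv^X)\tran(\xangv^X\times\MJ\xangv^X)=0$, I obtain $\dot{V}_k = \xcomv^Y\tran\vf^Y + (\xangv^X)\tran\vtau^X$. I would then apply the rigid-body power identity $\xcomv^Y\tran\vf^Y + (\xangv^X)\tran\vtau^X = \sumallpoints (\vv_i^Y)\tran\vf_i^Y$, where $\vv_i^Y = \MR(\xangv^X\times\xref_i)+\xcomv^Y$ is the particle velocity. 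Since each $\vy_i$ is fixed, $\dot{V}_p = -\sumallpoints (\vv_i^Y)\tran k_i(\vy_i-\vxx_i^Y)$, i.e.\ minus the power of the spring forces. Adding the two, the spring power cancels and only the damping power in $\vf_i^Y$ survives, giving $\dV = -\mu\sumallpoints m_i\|\vv_i^Y\|^2 \le 0$. This cancellation, with the frame bookkeeping between $\vf^Y$ (frame $Y$) and $\vtau^X$ (frame $X$), is the step I expect to require the most care.

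\textbf{Invariant set and equilibria.} Since all $m_i>0$, the set $\calR=\{\vs:\dV=0\}$ equals $\{\vv_i^Y=\zero\ \forall i\}$. Taking the mass-weighted sum and using $\sumallpoints m_i\xref_i=\zero$ yields $\xcomv^Y=\zero$; then $\MR(\xangv^X\times\xref_i)=\zero$ for all $i$, with $\MR$ invertible and the $\xref_i$ not all collinear, forces $\xangv^X=\zero$. To find the largest invariant set $\calI\subseteq\calR$, I would argue in the LaSalle style: on any trajectory confined to $\calR$ the velocities $\xcomv^Y,\xangv^X$ are identically zero, hence so are $\xcoma^Y,\xanga^X$, which through~\eqref{eq:dynamics} forces $\vf^Y=\zero$ and $\vtau^X=\zero$; together with $\xcomv^Y=\xangv^X=\zero$ this is precisely $\calF(\vs)=\zero$, so $\calI=\calE$. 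Theorem~\ref{thm:globalinvariant} then gives global asymptotic convergence to $\calE$. Finally, the alignment $\xcom^Y=\bar{\vy}=\zero$ at equilibria follows from $\vf^Y=\zero$ in~\eqref{eq:totalforce}: using $k_i=2m_i$, $\sumallpoints m_i\xref_i=\zero$, and $\bar{\vy}=\zero$ collapses $\sumallpoints k_i(\vy_i-\MR\xref_i-\xcom^Y)=\zero$ to $2M\xcom^Y=\zero$.
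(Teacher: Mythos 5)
Your proof is correct and follows the paper's overall strategy: take the total energy $V = V_k + V_p$ as the Lyapunov function, invoke the global invariant set theorem, identify the largest invariant set in $\calR$ with $\calE$ by the same LaSalle-style argument (velocities identically zero force accelerations, hence forces and torques, to vanish), and derive $\xcom^Y=\bar{\vy}=\zero$ from force balance exactly as the paper does. Where you genuinely depart from the paper is in the two analytic checks. For $\dV$, the paper expands $\dV_k$ and $\dV_p$ term by term in the state variables and cancels the spring contributions explicitly, arriving at $\dV = -\mu M\|\xcomv^Y\|^2 - \mu(\xangv^X)\tran\MJ\xangv^X$; you instead route everything through the per-particle power identity $(\xcomv^Y)\tran\vf^Y + (\xangv^X)\tran\vtau^X = \sum_i (\vv_i^Y)\tran\vf_i^Y$, so the spring power cancels against $\dV_p$ at the particle level and you land on the equivalent form $\dV = -\mu\sum_i m_i\|\vv_i^Y\|^2$ (the two expressions agree because $\sum_i m_i\xref_i=\zero$ kills the cross term and $\sum_i m_i\|\xangv^X\times\xref_i\|^2 = (\xangv^X)\tran\MJ\xangv^X$). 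Your version is shorter, makes the physical meaning of the dissipation transparent, and avoids the delicate frame bookkeeping in the paper's expansion, at the cost of one extra step --- the mass-weighted sum plus noncollinearity --- to convert $\{\vv_i^Y=\zero\ \forall i\}$ into $\{\xcomv^Y=\zero,\ \xangv^X=\zero\}$, a step the paper gets for free from its form of $\dV$ and $\MJ\succ\zero$. For radial unboundedness, your argument (compactness of $\SOthree$ bounds $\vr$, $V_k$ controls the velocities, a reverse triangle inequality on the spring terms controls $\xcom^Y$) is actually tighter than the paper's footnote, which reasons about $\|\MR\xref_i\|\to\infty$ as ``$\MR\to\infty$'' --- a regime that cannot occur on $\SOthree$ and which, taken over all of $\Real{3\times 3}$, would fail for coplanar point clouds. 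Both routes reach the same conclusion; yours is the cleaner of the two.
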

\begin{proof}
The total kinetic energy of the system is:
\bea
V_k(\vs) = \sumallpoints \frac{m_i}{2} \left\| \vv_i^Y \right\|^2 = \sumallpoints \frac{m_i}{2} \left\| \xcomv^Y + \MR\left( \xangv^X \times  \xref_i \right) \right\|^2 = \frac{M}{2}\left\| \xcomv^Y \right\|^2 + \half \left( \xangv^X \right)\tran \MJ \xangv^X, \label{eq:kineticenergy}
\eea
where the first term is the translational energy and the second term is the rotational energy \wrt its CM. The total potential energy of the system comes purely from spring forces:
\bea
V_p(\vs) = \sumallpoints \frac{k_i}{2} \left\| \vy_i - \MR \xref_i - \xcom^Y \right\|^2, \label{eq:potentialenergy}
\eea
which exactly recovers the objective function of problem~\eqref{eq:registrationOutlier-free} because $k_i = 2/\sigma_i^2$.\footnote{Note that since $\xref_i = \vxx_i - \xcom$, we have $\vy_i - \MR \xref_i - \xcom^Y = \vy_i - \MR\vxx_i - \left( \xcom^Y - \MR \xcom \right)$. Therefore, we can simply let $\vt = \xcom^Y - \MR \xcom$ to recover problem~\eqref{eq:registrationOutlier-free}. } The Lyapunov function is the total energy:
\bea
V(\vs) = V_k(\vs) + V_p(\vs) = \frac{M}{2}\left\| \xcomv^Y \right\|^2 + \half \left( \xangv^X \right)\tran \MJ \xangv^X + \sumallpoints \frac{k_i}{2} \left\| \vy_i - \MR \xref_i - \xcom^Y \right\|^2,
\eea 
which apparently satisfies $V(\vs) \rightarrow \infty$ as $\| \vs \| \rightarrow \infty$.\footnote{Because there are at least three noncollinear measurements $\vxx_i$'s, there are at least three nonparallel $\xref_i$'s. Hence, as $\MR \rightarrow \infty$, at least one of the $\| \MR\xref_i \|$ will go to infinity.} We then compute the derivative of $V(\vs)$, using the dynamic equations~\eqref{eq:dynamics}. The derivative of the kinetic energy is:
\bea
& \dV_k(\vs) = \left( \xcomv^Y \right)\tran \left( M \xcoma^Y \right) + \left( \xangv^X \right)\tran \left(\MJ \xanga^X \right) = \left( \xcomv^Y \right)\tran \vf^Y + \left( \xangv^X \right)\tran \left( \vtau^X - \xangv^X \times \MJ \xangv^X \right) \\
& \scriptstyle = \left( \xcomv^Y \right)\tran \left( \sum_i k_i \left( \vy_i - \MR \xref_i - \xcom^Y  \right) - \mu M \xcomv^Y \right) + \left( \xangv^X \right)\tran \left( \sum_i \left( k_i \hatmap{\xref_i} \MR\tran \left( \vy_i - \xcom^Y \right) - \mu m_i \hatmap{\xref_i}\hatmap{\xangv^X}\xref_i \right) \right) \\
& \scriptstyle = - \mu M \| \xcomv^Y \|^2 - \mu \left( \xangv^X \right) \tran \MJ \left( \xangv^X \right) + \sum_i k_i \left( \xcomv^Y \right)\tran \left( \vy_i - \MR \xref_i - \xcom^Y  \right) + 
 \sum_i k_i \left(\MR \hatmap{\xangv^X} \xref_i \right)\tran \left( \vy_i - \xcom^Y \right),
\eea
and the derivative of the potential energy is:
\bea
\dV_p(\vs) = \sum_i k_i \left( \vy_i - \MR \xref_i - \xcom^Y \right)\tran\left( - \MR\hatmap{\xangv^X} \xref_i - \xcomv^Y \right),
\eea
and therefore the derivative of the total energy is:
\bea
\dV(\vs) = - \mu M \| \xcomv^Y \|^2 - \mu \left( \xangv^X \right) \tran \MJ \left( \xangv^X \right) \leq 0.\label{eq:Vdot} 
\eea
We remark that one can obtain $\dV(\vs)$ in~\eqref{eq:Vdot} directly from physics: $\dV(\vs)$ is the \emph{energy dissipation rate}, and the only energy dissipation of this system comes from viscous damping. From the expression of $\dV(\vs)$, we know the set $\calR = \{ \vs: \dV(\vs) = 0\}$ is precisely the set $\calR = \{\vs: \xcomv^Y = \zero, \xangv^X = \zero\}$ because $\dV(\vs) = 0$ if and only if both velocities $\xcomv^Y$ and $\xangv^X$ vanish. Now consider the invariant set within $\calR$: because both $\xcomv^Y=\zero$ and $\xangv^X = \zero$, for any point to stay inside $\calR$, the translational and rotational accelerations must be zero,~\ie~$\xcoma^Y = \zero$, $\xanga^X = \zero$. In other words, the largest invariant set in $\calR$ is precisely the set of \emph{equilibrium points} $\calE = \{ \vs: \dvs = \calF(\vs) = \zero\}$, and from the global invariant set theorem~\ref{thm:globalinvariant}, we know that the system tends towards $\calE$ from any initial state. Eventually, from the dynamics~\eqref{eq:dynamics}, we know that any point in $\calE$ satisfies:
\bea
\calE:
\begin{cases}
\sumallpoints k_i \left( \vy_i - \MR \xref_i - \xcom^Y \right) = \zero \\
\sumallpoints k_i \hatmap{\xref_i} \MR\tran \left( \vy_i - \xcom^Y \right) = \zero \\
\xcomv^Y = \zero \\
\xangv^X = \zero
\end{cases}
\Longrightarrow 
\calE:
\begin{cases}
\xcom^Y = \bar{\vy} = \zero \\
\sumallpoints k_i \hatmap{\xref_i} \MR\tran \vy_i = \zero \\
\xcomv^Y = \zero \\
\xangv^X = \zero
\end{cases}, \label{eq:invariantSet}
\eea
where the total force and torque applied on $\calX$ are both zero. From the second equation in~\eqref{eq:invariantSet}, we know that the origin of $X$ aligns with the origin of $Y$.
\end{proof}

An immediate corollary states that the unique globally optimal minimizer of the potential energy $V_p$ written in~\eqref{eq:potentialenergy} (and hence, the globally optimal solution to problem~\eqref{eq:registrationOutlier-free}) lies inside the set of equilibrium points.

\begin{corollary}[Global Optimizer]\label{coro:globalOptimizer}
If the measurements contain at least three noncollinear points $\vxx_i$'s, then the potential energy $V_p$ in~\eqref{eq:potentialenergy} admits a unique global minimizer $(\xcom^{Y\star},\MR^\star)$. Let $\vs^\star = [\xcom^{Y\star};\vr^\star;\zero_{3\times1};\zero_{3\times1}]$, then $\vs^\star$ is contained in the set of equilibrium points $\calE = \{\vs: \dvs = \calF(\vs) = \zero \}$.
\end{corollary}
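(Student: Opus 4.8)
The plan is to reduce the claim to a \emph{stationarity check}: I would show that the minimizer of $V_p$, padded with zero velocities, makes the total force and torque vanish and hence satisfies $\calF(\vs^\star)=\zero$. Existence and uniqueness of the minimizer $(\xcom^{Y\star},\MR^\star)$ I would take directly from the closed-form solution of Horn~\cite{Horn87josa} and Arun~\etal~\cite{Arun87pami}: since $V_p$ in~\eqref{eq:potentialenergy} coincides with the objective of~\eqref{eq:registrationOutlier-free} under the reparametrization $\vt=\xcom^Y-\MR\xcom$, one minimizes first over $\xcom^Y\in\Real{3}$ and then over $\MR\in\SOthree$, obtaining the analytic translation and the SVD-based rotation. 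A useful preliminary observation is that, because $k_i=2m_i$, $\sumallpoints m_i\xref_i=\zero$, and $\bar{\vy}=\zero$, the first-order condition in $\xcom^Y$ forces $\xcom^{Y\star}=\zero$, matching the alignment of origins already derived in~\eqref{eq:invariantSet}.

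Next I would form $\vs^\star=[\xcom^{Y\star};\vr^\star;\zero_{3\times1};\zero_{3\times1}]$ and evaluate $\calF(\vs^\star)$ row by row against~\eqref{eq:dynamics}. The first two rows, $\dxcom^Y=\xcomv^Y$ and $\dMR=\MR\hatmap{\xangv^X}$, vanish immediately since both velocities are zero. For the third row I must show $\vf^Y=\zero$: the damping term $-\mu M\xcomv^Y$ in~\eqref{eq:totalforce} drops out with $\xcomv^Y=\zero$, and the residual spring term $\sumallpoints k_i(\vy_i-\MR^\star\xref_i-\xcom^{Y\star})$ is exactly $-\partial V_p/\partial\xcom^Y$ at the minimizer, hence zero by first-order optimality (equivalently, it vanishes because $\xcom^{Y\star}=\zero$). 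For the fourth row, $\xangv^X=\zero$ kills both the gyroscopic term $\xangv^X\times\MJ\xangv^X$ and the damping torque in~\eqref{eq:totaltorque}, so it remains to prove that the residual spring torque $\vtau^X=\sumallpoints k_i\hatmap{\xref_i}(\MR^\star)\tran\vy_i$ equals $\zero$.

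The crux is therefore to identify this residual torque with the Riemannian first-order stationarity condition of $V_p$ in $\MR$ on $\SOthree$. Writing the attitude-profile matrix $\MM=\sumallpoints k_i\vy_i\xref_i\tran$, the Procrustes stationarity condition is that $(\MR^\star)\tran\MM$ be symmetric. I would then use the hat-map identity $\hatmap{\va}\vb=\va\times\vb$ together with the fact that the axial vector of $\va\vb\tran-\vb\va\tran$ equals $\vb\times\va$ to show that $\vtau^X$ is (up to a scalar) the axial vector of the skew-symmetric part of $(\MR^\star)\tran\MM$; thus $\vtau^X=\zero$ precisely when $(\MR^\star)\tran\MM$ is symmetric, which holds because $\MR^\star$ is a global, hence stationary, minimizer. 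Combining the four rows yields $\calF(\vs^\star)=\zero$, i.e. $\vs^\star\in\calE$, consistent with the characterization of $\calE$ in Proposition~\ref{prop:convergenceEquilibria}.

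I expect the main obstacle to be twofold. First, cleanly establishing the torque-to-gradient correspondence requires careful hat/vee bookkeeping to match the skew part of $(\MR^\star)\tran\MM$ with $\sumallpoints k_i\hatmap{\xref_i}(\MR^\star)\tran\vy_i$. Second, and more delicate, is justifying \emph{uniqueness} of the minimizer; this is where noncollinearity enters, guaranteeing $\rank{\MM}\geq 2$ so that the sign-corrected SVD solution of the underlying Wahba problem is well defined and, generically, unique. Importantly, the stationarity argument establishing equilibrium membership is robust and does \emph{not} rely on uniqueness, so the two parts of the corollary can be proved independently.
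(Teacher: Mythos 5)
Your proposal is correct, but it takes a genuinely different route from the paper. The paper never verifies stationarity of $V_p$ directly: it observes that $\vs^\star$ simultaneously minimizes $V_k$ (zero velocities) and $V_p$ (by Horn), hence is the \emph{unique global minimizer of the total energy} $V$, and then argues by contradiction that $\vs^\star$ must be an equilibrium---if the trajectory launched at $\vs^\star$ moved to some $\hat{\vs}\neq\vs^\star$, the dissipation inequality $\dV\leq 0$ would give $V(\hat{\vs})\leq V(\vs^\star)$, contradicting uniqueness of the minimizer. That argument is shorter, needs nothing beyond the machinery of Proposition~\ref{prop:convergenceEquilibria}, but genuinely relies on uniqueness. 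Your route instead identifies the residual spring force with $-\partial V_p/\partial\xcom^Y$ and the residual torque with the Riemannian gradient of $V_p$ on $\SOthree$, so equilibrium membership follows from first-order optimality alone and, as you correctly note, decouples from the uniqueness claim. The hat/vee bookkeeping you flag as the main obstacle does close: with the identity $\hatmap{\vb\times\va}=\va\vb\tran-\vb\va\tran$ applied to $\va=(\MR^\star)\tran\vy_i$, $\vb=\xref_i$, one gets $(\MR^\star)\tran\MM-\MM\tran\MR^\star=\hatmap{\vtau^X}$ for your $\MM=\sumallpoints k_i\vy_i\xref_i\tran$, so $\vtau^X=\zero$ iff $(\MR^\star)\tran\MM$ is symmetric, which is exactly the Procrustes stationarity condition. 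In exchange for more computation, your approach buys more than the paper's: it shows that the equilibria of~\eqref{eq:invariantSet} are precisely the stationary points of $V_p$ padded with zero velocities, which is the structure underlying Conjecture~\ref{conj:characterizeEquilibria}, whereas the paper's energy argument only places the global minimizer inside $\calE$.
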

\begin{proof}
From~\cite{Horn87josa}, we know that the potential energy admits a unique closed-form global optimizer when there are at least three noncollinear measurements. Since $\xcomv^Y=\zero, \xangv^X = \zero$ is the unique minimizer of the kinetic energy $V_k$ in~\eqref{eq:kineticenergy}, it follows that $\vs^\star$ is the unique global minimizer of the total energy $V$. Clearly $\dV(\vs^\star) = \zero$ and $\vs^\star \in \calR$. Therefore $\vs^\star \in \calE$ must hold. Otherwise, suppose $\vs^\star \not\in \calE$, and $\vs^\star$ becomes $\hat{\vs}$ at the next time step. From $\dV \leq 0$, we have $V(\hat{\vs}) \leq V(\vs^\star)$, contradicting the fact that $\vs^\star$ is the unique global minimizer of $V$.  
\end{proof}

Proposition~\ref{prop:convergenceEquilibria} and Corollary~\ref{coro:globalOptimizer} together imply that the dynamical system~\eqref{eq:dynamics} will globally asymptotically tend to the set of equilibrium points, which contains the unique global minimizer of the potential energy as a specific solution. However, this result is insufficient to conclude that simulating the dynamics~\eqref{eq:dynamics} can solve the optimization problem~\eqref{eq:registrationOutlier-free}, because there could exist many equilibrium points. The next conjecture states that the system does have spurious equilibrium points other than the global minimizer in Corollary~\ref{coro:globalOptimizer}, but all spurious equilibrium points are locally unstable. 

\begin{conjecture}[Characterization of Equilibria]\label{conj:characterizeEquilibria} Besides the equilibrium point corresponding to the global minimizer of the energy function as stated in Corollary~\ref{coro:globalOptimizer}, the dynamical system $\dvs = \calF(\vs)$ in~\eqref{eq:dynamics} has either three or infinite spurious equilibrium points. Moreover, 
\begin{enumerate}[label=(\roman*)]
	\item \label{conj:three} when the point cloud has generic shape,~\ie~all points $\vxx_i$ are randomly generated, the system has three spurious equilibria, and the rotations at all three equilibria differ from the globally optimal rotation by $180^\circ$;
	\item \label{conj:infinite} when the point cloud exhibits symmetry, the system has infinite spurious equilibria;
	\item all spurious equilibria are \emph{locally unstable}.
\end{enumerate} 
\end{conjecture}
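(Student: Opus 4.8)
The plan is to reduce the characterization of $\calE$ to a purely algebraic question about a single $3\times3$ matrix, enumerate its solutions via the SVD, and then decide stability from the Hessian of the potential energy. Recall from~\eqref{eq:invariantSet} that, after enforcing $\xcom^Y=\zero$ and the vanishing velocities, an equilibrium is entirely determined by a rotation satisfying the zero-torque condition $\sumallpoints k_i \hatmap{\xref_i}\MR\tran\vy_i = \zero$. I would introduce the weighted cross-covariance matrix $\MK \doteq \sumallpoints k_i \xref_i \vy_i\tran \in \Real{3\times3}$ and use the identity $\hatmap{\va\times\vb} = \vb\va\tran - \va\vb\tran$ to rewrite the torque condition as $\MR\tran\MK\tran - \MK\MR = \zero$, i.e.\ \emph{$\MK\MR$ is symmetric}. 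This is exactly the first-order stationarity condition for $V_p$ (equivalently, for maximizing $\trace{\MK\MR}$ over $\SOthree$, after noting $\sumallpoints k_i\xref_i=\zero$ decouples the $\xcom^Y$ block), so $\calE$ is in bijection with the critical rotations of $V_p$, consistent with the physical reading that equilibrium means zero torque means zero gradient.

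First I would enumerate these critical rotations. Writing the SVD $\MK = \MU\MSigma\MV\tran$ with $\MSigma = \diag{\sigma_1,\sigma_2,\sigma_3}$ and assuming $\MK$ full rank, solving $\MK\MR=\MS$ for symmetric $\MS$ together with $\MR\tran\MR=\eye$ forces $\MS^2=\MK\MK\tran$, hence $\MR = \MV\,\diag{\epsilon_1,\epsilon_2,\epsilon_3}\,\MU\tran$ with $\epsilon_i\in\{+1,-1\}$ and $\epsilon_1\epsilon_2\epsilon_3 = \det(\MU)\det(\MV)$ to enforce $\det(\MR)=+1$. When $\sigma_1,\sigma_2,\sigma_3$ are \emph{distinct}, exactly four sign patterns meet this determinant constraint; the one maximizing $\trace{\MK\MR}=\sum_i\epsilon_i\sigma_i$ is the global optimizer $\MR^\star$ of Corollary~\ref{coro:globalOptimizer}, and the remaining three are the spurious equilibria. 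Each spurious pattern differs from $\MR^\star$ by flipping \emph{exactly two} signs, so $\MR^{\text{spur}}$ equals $\MR^\star$ pre-multiplied by $\MV\,\diag{\cdots}\,\MV\tran$ with a single $+1$ on the diagonal, which is a $180^\circ$ rotation about a principal axis; this establishes claim~\ref{conj:three}. When a singular value is repeated, the symmetric square root of $\MK\MK\tran$ is no longer discrete: a rotation acting inside the repeated eigenspace produces a one-parameter (circle) family of admissible $\MR$, yielding infinitely many equilibria, which is claim~\ref{conj:infinite}.

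For stability I would compute the Hessian of $V_p$ on $\SOthree$. Expanding $\trace{\MK\MR\exp(\hatmap{\vw})}$ at a critical point where $\MS=\MK\MR$ is symmetric with eigenvalues $\lambda_i=\epsilon_i\sigma_i$, the first-order term vanishes and, using $\hatmap{\vw}^2 = \vw\vw\tran - \|\vw\|^2\eye$, the quadratic term is $\half\sum_i(\lambda_i-\trace{\MS})w_i^2$ in the eigenbasis of $\MS$. Hence the rotational block of the Hessian of $V_p$ has eigenvalues $\{\lambda_j+\lambda_k\}$ over the three pairs $\{j,k\}$, while the translational block is $(\sumallpoints k_i)\eye\succ0$ and decouples. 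At $\MR^\star$ all pairwise sums are positive, the Hessian is positive definite, and the equilibrium is a strict local minimum of $V_p$ (asymptotically stable by Lagrange--Dirichlet with damping). At any spurious equilibrium at least two of the $\lambda_i$ are flipped relative to the optimum, forcing at least one pairwise sum $\lambda_j+\lambda_k<0$, so the Hessian is indefinite and the equilibrium is a saddle of $V_p$.

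Finally I would upgrade ``saddle of the potential'' to ``unstable equilibrium of the nonlinear dynamics'' by linearizing $\dvs=\calF(\vs)$ at $\vs^{\text{spur}}=(\zero,\vr^{\text{spur}},\zero,\zero)$. Because the velocities vanish at equilibrium, the gyroscopic term $\xangv^X\times\MJ\xangv^X$ is quadratic and drops out, and the damping is \emph{proportional} (it equals $\mu\M{\tilde M}$ with $\M{\tilde M}=\diag{M\eye,\MJ}\succ0$), leaving $\M{\tilde M}\ddot{\vq}+\mu\M{\tilde M}\dot{\vq}+\M{\tilde K}\vq=\zero$ with $\M{\tilde K}$ the Hessian above. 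Proportional damping modally decouples the system, so each mode obeys $\lambda^2+\mu\lambda+\kappa=0$ where $\kappa$ is a generalized eigenvalue of $(\M{\tilde K},\M{\tilde M})$; the indefinite rotational block guarantees some $\kappa<0$, which gives a root with strictly positive real part, hence local instability by Lyapunov's indirect method. I expect the main obstacle to be not this computation but the \emph{genericity and symmetry bookkeeping} behind the ``three versus infinite'' dichotomy: rigorously proving that random clouds almost surely yield a full-rank $\MK$ with distinct singular values (ruling out coplanar and other degeneracies), pinning down the exact correspondence between repeated singular values of $\MK\MK\tran$ and geometric symmetry of $\{\xref_i\}$, and verifying instability in the degenerate case where the saddle direction is tangent to a positive-dimensional equilibrium manifold. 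This is likely why the statement is posed as a conjecture rather than a theorem.
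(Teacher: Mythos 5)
Your route is genuinely different from, and considerably more complete than, the paper's. The paper's ``proof'' only treats item (ii), and only graphically: it exhibits torque balance for two planar symmetric examples ($N=3$ equilateral triangle, $N=4$ square) parametrized by an angle $\theta$, and argues informally that an out-of-plane perturbation destroys the equilibrium; items (i) and (iii) are left entirely to numerics (solving the polynomial system~\eqref{eq:invariantSet} and inspecting the Jacobian eigenvalues over 50 Monte Carlo runs). You instead reduce the equilibrium set to the algebraic condition that $\MK\MR$ be symmetric for the weighted cross-covariance $\MK = \sumallpoints k_i \xref_i\vy_i\tran$ (the sign conventions and the identity $\hatmap{\va\times\vb}=\vb\va\tran-\va\vb\tran$ check out, and the translational block does decouple since $\sumallpoints k_i\xref_i=\zero$), enumerate the solutions by the SVD --- four isolated rotations when $\MK$ has full rank and distinct singular values, a continuum when a singular value repeats --- identify the three spurious ones as $180^\circ$ flips about the principal axes of $\MK\MK\tran$, and certify instability from the indefiniteness of the Hessian eigenvalues $\lambda_j+\lambda_k$ together with the (correct) observation that the rotational damping torque equals $-\mu\MJ\xangv^X$, so the damping is proportional to the mass matrix, the linearized modes decouple, and a negative stiffness eigenvalue forces a root in the open right-half plane. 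This explains exactly what the paper only observes empirically (four equilibria, $180^\circ$ rotation errors, $0/1/2/3$ unstable eigenvalues), and it subsumes the paper's symmetric examples, since the planar isotropic triangle gives $\MK\MK\tran$ a repeated eigenvalue. In short, the paper's approach buys only two concrete instances of (ii); yours would, if completed, upgrade (i) and (iii) to a theorem in the generic case.

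Two caveats, both of which you already flag. First, the genericity bookkeeping (full-rank $\MK$ with distinct singular values almost surely, and the sign pattern at the optimum when $\det(\MU)\det(\MV)=-1$, where one pairwise sum is only $\sigma_2-\sigma_3\ge 0$) must be done to close the generic case. Second, in the continuum case the direction tangent to the equilibrium manifold contributes a zero eigenvalue, so Lyapunov's indirect method is inconclusive there and item (iii) needs a transverse-direction or center-manifold argument --- a gap your sketch shares with the paper's informal ``perturb point 2 out of the plane'' reasoning, and presumably the reason the statement remains a conjecture.
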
 
\begin{proof}
We provide a graphical proof for~\ref{conj:infinite} in the case of $N=3$ and $N=4$, as shown in Fig.~\ref{fig:symmetry}. When $N=3$ (Fig.~\ref{fig:symmetry}(a)), consider both $\calX$ (blue) and $\calY$ (red) are equilateral triangles with $l$ being the length from the vertex to the center. Assume the particles have equal masses such that the CM is also the geometric center $O$, and all virtual springs have equal coefficients. $\calX$ is obtained from $\calY$ by first rotating counter-clockwise (CCW) around $O$ with angle $\theta$, and then flipped about the line that goes through point 1 and the middle point between point 2 and 3. We will show that this is an equilibrium point of the dynamical system for any $\theta$. When the CM of $\calX$ and the CM of $\calY$ aligns, we know the forces $\vf_i,i=1,2,3$ are already balanced. It remains to show that the torques $\vtau_i,i=1,2,3$ are also balanced for any $\theta$. $\vtau_1$ and $\vtau_3$ applies clockwise (CW, cyan) and the value of their sum is:
\bea
\hspace{-10mm} \| \vtau_1 + \vtau_3 \| = \| \vtau_1 \| + \| \vtau_3 \| = kl^2 \left( \sin \theta +  \sin\beta \right) = kl^2 \left( \sin\theta + \sin\left(\theta + \frac{2\pi}{3} \right) \right) = kl^2 \sin\left( \theta + \frac{\pi}{3} \right),
\eea
and $\tau_2$ applies CCW (green) and its value is:
\bea
\| \vtau_2 \| = kl^2 \sin\alpha  = kl^2 \sin\left( \frac{2\pi}{3} - \theta \right) = kl^2 \sin\left( \theta + \frac{\pi}{3} \right).
\eea
Therefore, the torques cancel with each other and the configuration in Fig.~\ref{fig:symmetry}(a) is an equilibrium state for all $\theta$. However, it is easy to observe that this type of equilibrium is unstable because any perturbation that drives point 2 out of the 2D plane will immediately drives the system out of this type of equilibrium. When $N=4$, one can verify that same torque cancellation happens: 
\bea
\| \vtau_1 \| = kl^2 \sin\beta = kl^2 \sin\left( \theta + \frac{\pi}{2} \right) =  kl^2 \sin\left( \frac{\pi}{2} - \theta \right) = kl^2 \sin\alpha = \| \vtau_3 \|,
\eea
and the system also has infinite locally unstable equilibria.
\end{proof}


\newcommand{\mpwthree}{6cm}
\newcommand{\myhspace}{\hspace{-3mm}}

\begin{figure}[t]
	\begin{center}
	\begin{minipage}{\textwidth}
	\begin{tabular}{cc}%
			\begin{minipage}{\mpwthree}%
			\centering%
			\includegraphics[width=0.9\columnwidth]{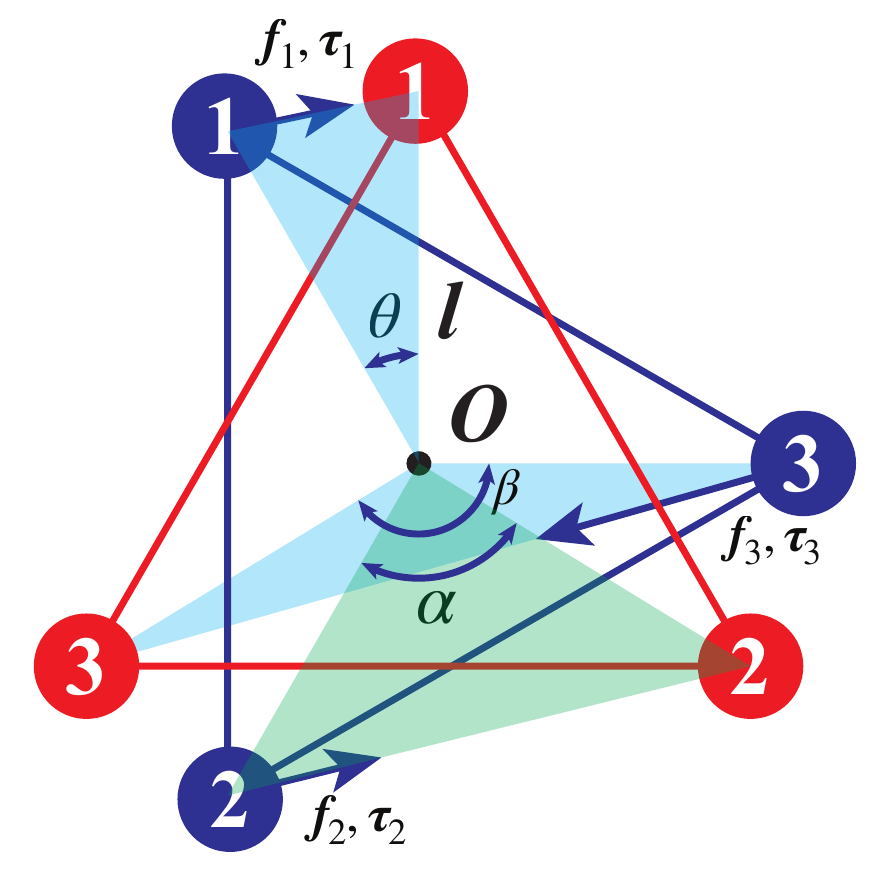} \\
			{\small (a) Equilateral triangle.}
			\end{minipage}
		& \hspace{6mm}
			\begin{minipage}{\mpwthree}%
			\centering%
			\includegraphics[width=0.9\columnwidth]{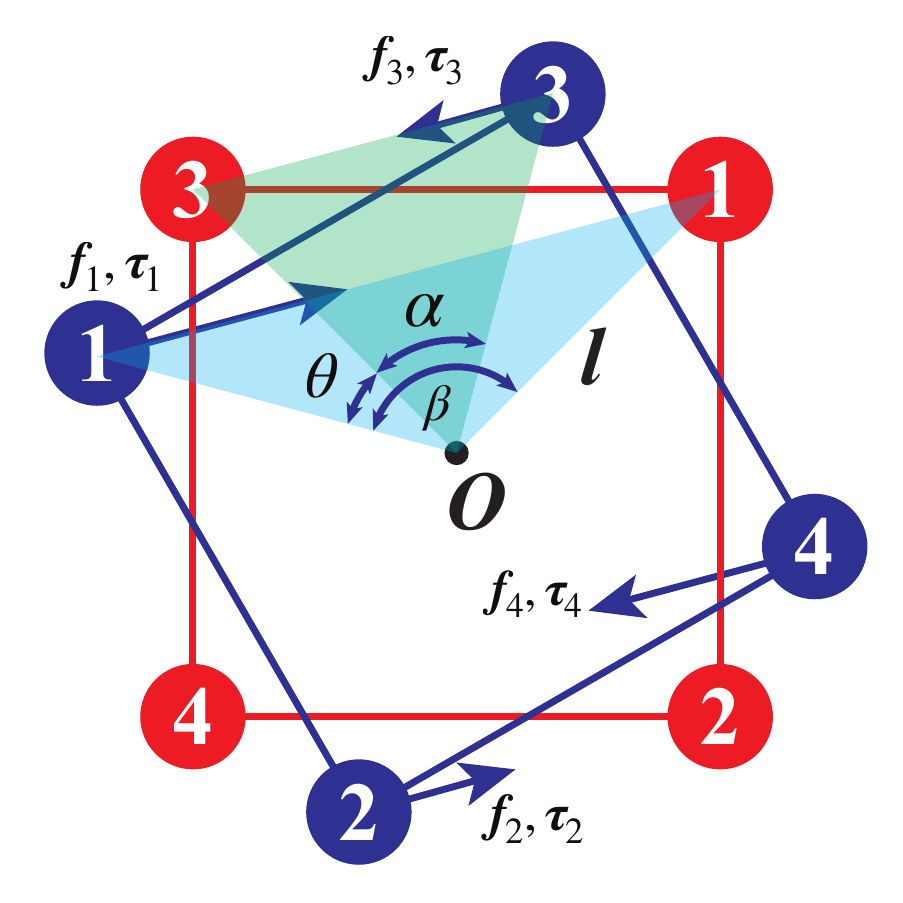} \\
			{\small (b) Square.}
			\end{minipage}
	\end{tabular}
	\end{minipage} 
	\caption{Examples of symmetric point clouds: (a) an equilateral triangle and (b) a square. The dynamical system has infinite equilibrium points.
	\label{fig:symmetry}} 
	\vspace{-5mm} 
	\end{center}
\end{figure}

The (partial) proof above justifies~\ref{conj:infinite}. To verify~\ref{conj:three} of Conjecture~\ref{conj:characterizeEquilibria}, we numerically solve the equilibrium equations~\eqref{eq:invariantSet}, and compute the eigenvalues of the Jacobian $\MA = \partial \calF /\partial \vs$ at each of the solutions. From Lyapunov's linearization method~\cite{Slotine91book-appliedNonlinearControl}, if the Jacobian $\MA$ has at least one eigenvalue \emph{strictly in the right-half complex plane}, then it serves as a certificate for local instability of the equilibrium point. In Section~\ref{sec:experiments}, we show that solving the system of polynomial equations~\eqref{eq:invariantSet} always yields four solutions, and the Jacobian $\MA$ at three of the spurious solutions always has eigenvalues with positive real parts. 

If Conjecture~\ref{conj:characterizeEquilibria} were true, then simulating the dynamics has high probability of converging to the globally optimal solution. In addition, if the simulation does converge to a locally unstable equilibrium, then computing the Jacobian can inform this unexpected convergence and adding a small perturbation can help the simulation escape this locally unstable equilibrium until it reaches the true global optimizer. In fact, in Section~\ref{sec:experiments}, all the experiments converge to the global optimizer.
\section{Numerical Experiments}
\label{sec:experiments}

\subsection{Characterization of Equilibria}
\label{sec:equilibria}
We first run numerical experiments to justify Conjecture~\ref{conj:characterizeEquilibria}. At each Monte Carlo run, we first randomly generate a point cloud $\calX$, where each point follows an isotropic Gaussian distribution with standard deviation 1,~\ie~$\vxx_i \sim \calN(\zero,\eye_3)$. Then we randomly samples a rotation matrix $\MR$ and a translation vector $\vt \sim \calN(\zero,\eye_3)$, and obtain the point cloud $\calY$ by: $\vy_i = \MR \vxx_i + \vt + \vepsilon_i$, where $\vepsilon_i \sim \calN(\zero,\sigma_i^2 \eye_3)$ models the Gaussian noise with $\sigma_i = 0.01$. Then we shift both $\calX$ and $\calY$ by $\bar{\vy}$ so that $\calY$ is centered at the origin, so that the ground-truth transformation is now: $\MRgt = \MR, \vtgt = \vt + \MR \bar{\vy} - \bar{\vy}$. We then use Matlab to solve the second equation in~\eqref{eq:invariantSet}: $\sumallpoints k_i \hatmap{\xref_i} \MR\tran \vy_i = \zero$, together with the constraint that $\MR \in \SOthree$, which boils down to a set of quadratic polynomial equalities in the entries of $\MR$ (see~\eg~\cite{Yang20cvpr-shapeStar} for the expressions). This gives us the set of equilibrium points $\calE$. At each Monte Carlo run, we also obtain the closed-form solution $\MR^\star,\vt^\star$ using the method from~\cite{Horn87josa}.

We run 50 Monte Carlo runs, and we find that, (i) the set $\calE$ always contains four equilibrium points; (ii) the Jacobian $\calA$ at the four equilibrium points has \emph{zero, one, two and three} eigenvalues strictly on the right-half complex plane, respectively. Fig.~\ref{fig:eigenvalues} provides the superimposed scatter plots of the eigenvalues on the complex plane. This clearly shows that there are three equilibrium points that are \emph{locally unstable}; (iii) moreover, we compute the angular error between the rotation estimations at the four equilibrium points, denoted $\hat{\MR}$, and the closed-form rotation $\MR^\star$, which is $\scriptstyle \left|\arccos\left( \left( \trace{\hat{\MR}\tran \MR^\star} -1  \right) /2 \right)\right|$. We find that the three \emph{locally unstable} equilibrium points yield rotation errors that are always $180^\circ$, and the equilibrium point that has no eigenvalues with positive real parts always produces exactly $\MR^\star$,~\ie~rotation errors of $0^\circ$. These observations strongly support Conjecture~\ref{conj:characterizeEquilibria}. In addition, it suggests that the set of equilibrium points contains the globally optimal solution $\MR^\star$ and three other solutions that are generated from $\MR^\star$ by \emph{an additional rotation of $180^\circ$ around some axis}, and these solutions preserve torque balance.


\renewcommand{\mpwthree}{4cm}
\renewcommand{\myhspace}{\hspace{-3mm}}

\begin{figure}[t]
	\begin{center}
	\begin{minipage}{\textwidth}
	\begin{tabular}{cccc}%
		\hspace{-15mm}
			\begin{minipage}{\mpwthree}%
			\centering%
			\includegraphics[width=\columnwidth]{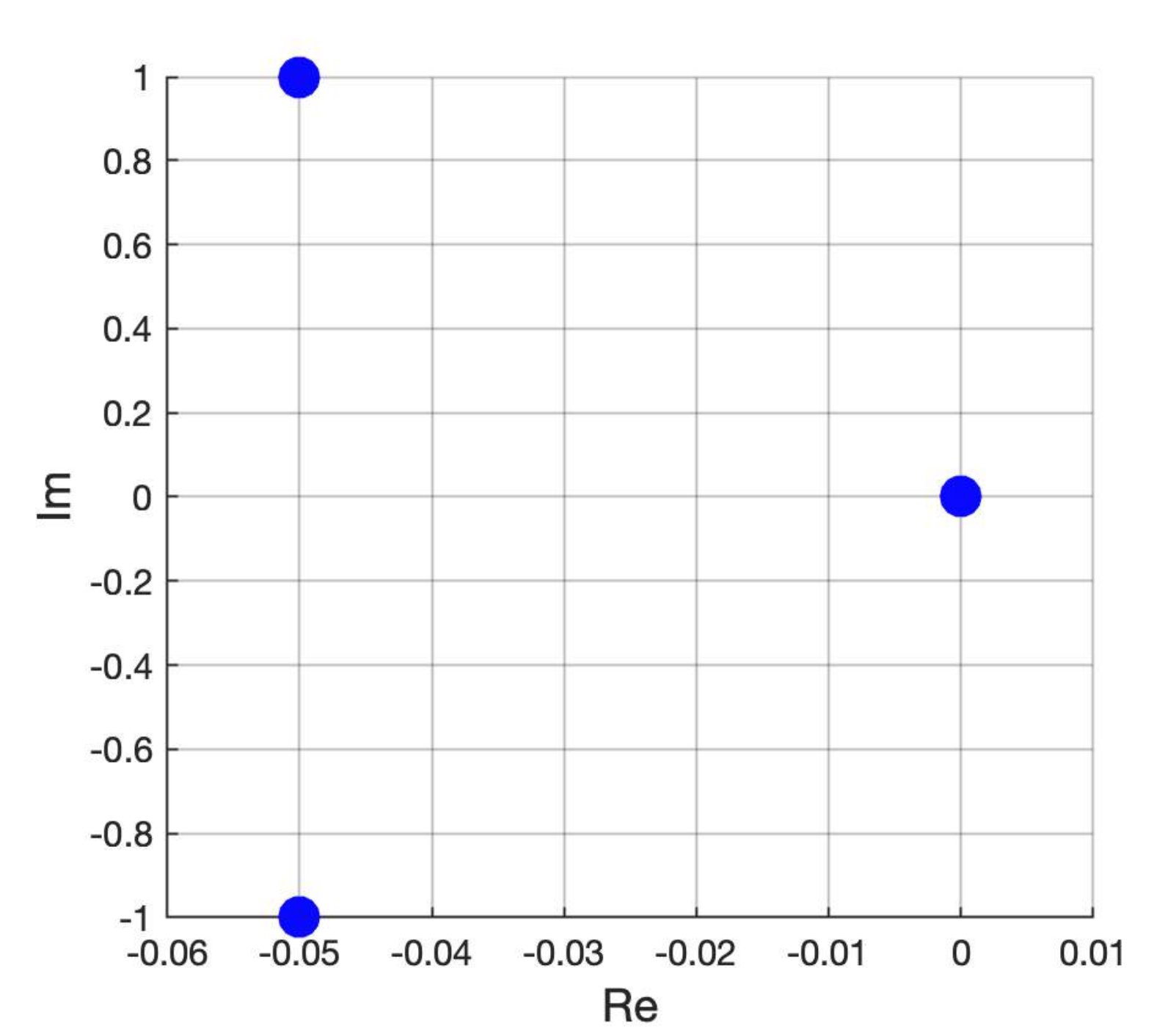} \\
			{\small (a) Zero eigenvalue with positive real parts.}
			\end{minipage}
		& \hspace{-5mm}
			\begin{minipage}{\mpwthree}%
			\centering%
			\includegraphics[width=\columnwidth]{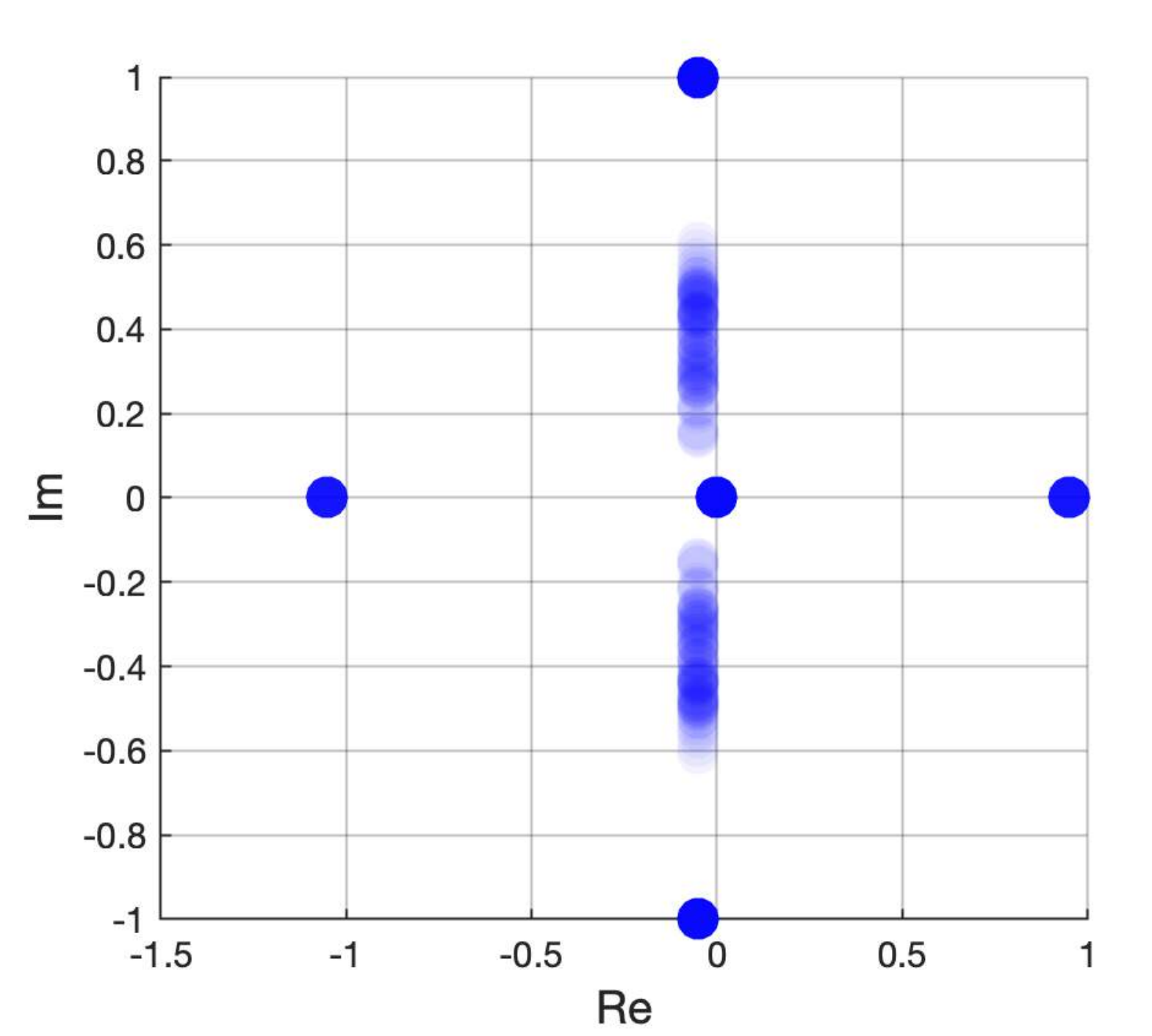} \\
			{\small (b) One eigenvalue with positive real parts.}
			\end{minipage}
		& \hspace{-5mm}
			\begin{minipage}{\mpwthree}%
			\centering%
			\includegraphics[width=\columnwidth]{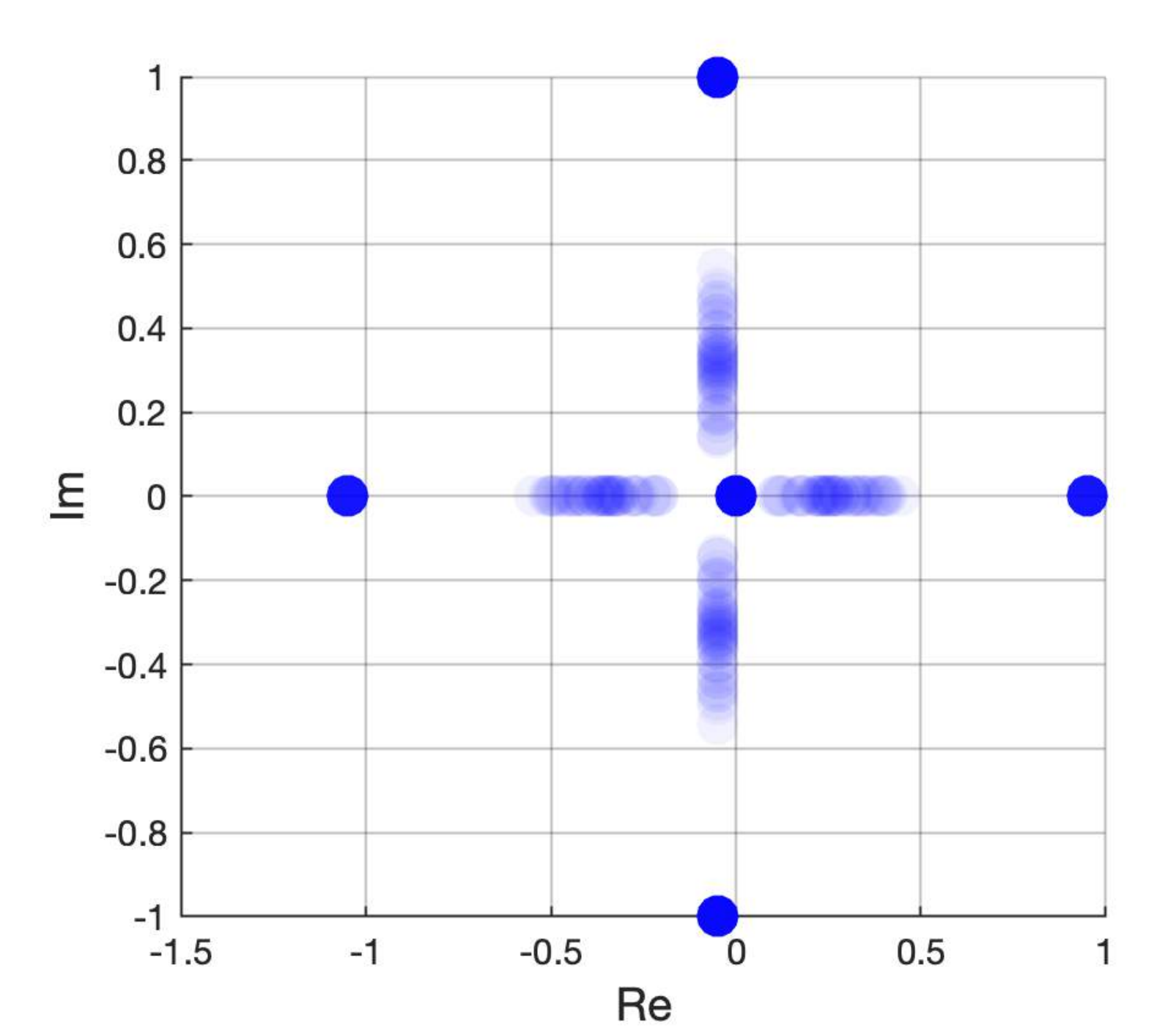} \\
			{\small (c) Two eigenvalues with positive real parts.}
			\end{minipage} 
		& \hspace{-5mm}
			\begin{minipage}{\mpwthree}%
			\centering%
			\includegraphics[width=\columnwidth]{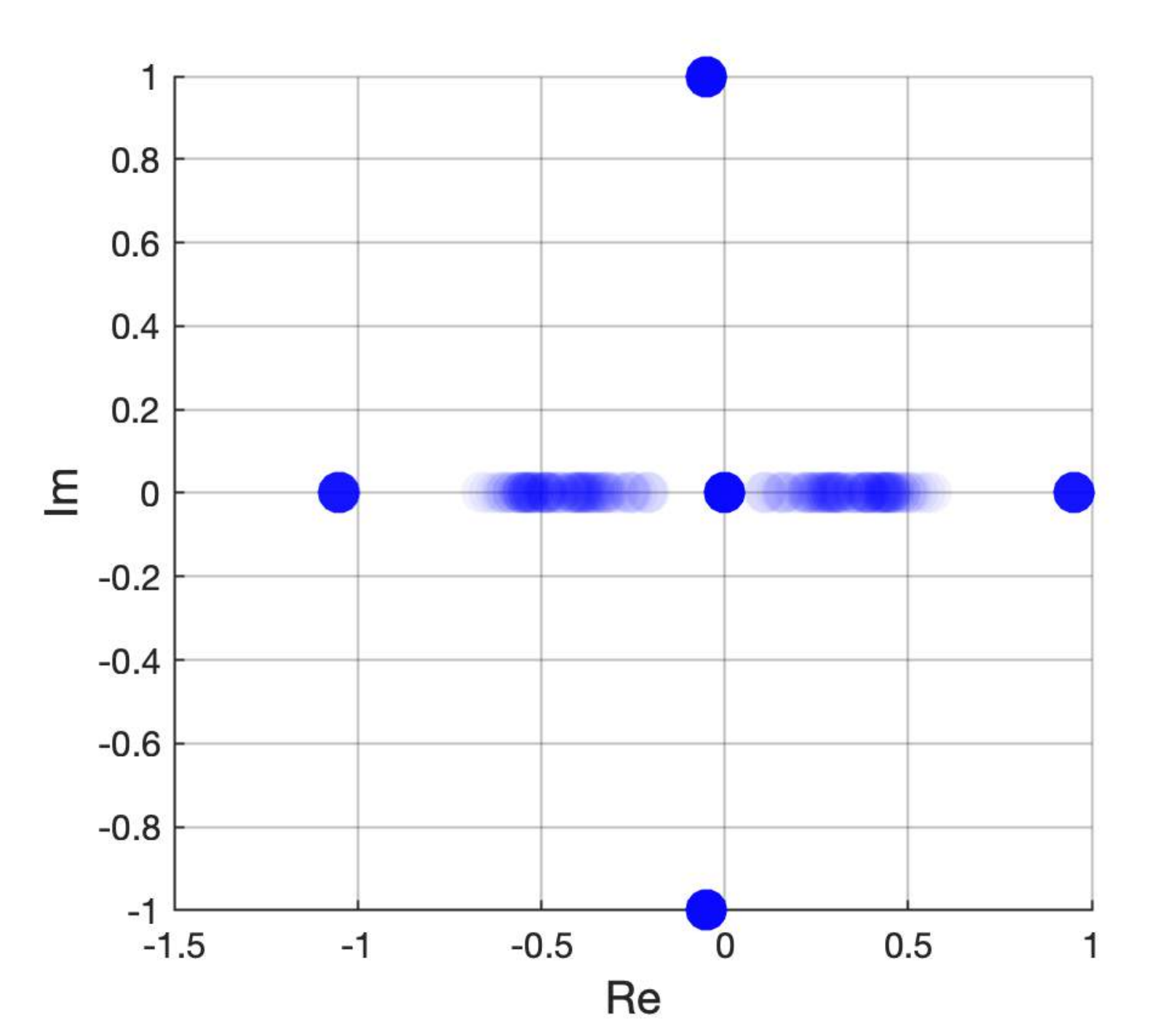} \\
			{\small (d) Three eigenvalues with positive real parts.}
			\end{minipage}
	\end{tabular}
	\end{minipage} 
	\caption{Eigenvalues of the Jacobian matrix $\MA$ on the complex plane, plotted from 50 Monte Carlo runs. (a) Eigenvalues of the Jacobian matrix $\MA$ at the \emph{globally optimal equilibrium}, all eigenvalues are on the left-half plane. (b) Eigenvalues of the Jacobian matrix $\MA$ at the first \emph{locally unstable equilibrium}, one eigenvalue lies strictly on the right-half plane. (c) Eigenvalues of the Jacobian matrix $\MA$ at the second \emph{locally unstable equilibrium}, two eigenvalues lie strictly on the right-half plane. (d) Eigenvalues of the Jacobian matrix $\MA$ at the third \emph{locally unstable equilibrium}, three eigenvalues lie strictly on the right-half plane. Each Monte Carlo run generates a transparent set of circles indicating the locations of the eigenvalues (opaque circles imply multiple occurrences of the same eigenvalue).
	\label{fig:eigenvalues}} 
	\vspace{-5mm} 
	\end{center}
\end{figure}

\subsection{Simulation of the Dynamics}
We then simulate the dynamics in~\eqref{eq:dynamics} with constant time step size $0.01$s to solve 100 random instances of point cloud registration generated by the same procedure as in Section~\ref{sec:equilibria}. At each time step, we project the matrix $\MR^{(t+1)} = \MR^{(t)} + dt \cdot \dMR$ to $\SOthree$ to make sure it is a valid rotation matrix. We stop the simulation when $\|\dvs \| < 10^{-4}$. Fig.~\ref{fig:simulation}(a) shows the statistics of rotation errors over 100 Monte Carlo runs, demonstrating that the system always escapes the three locally unstable equilibrium points and converges to the globally optimal solution. Fig.~\ref{fig:simulation}(b) plots a typical example of the trajectories of the total energy $V$ and total potential energy $V_p$ \wrt time. We see that the total energy is always non-increasing, but the potential energy $V_p$ oscillates and eventually reaches its global minimum. From the lens of optimization, the simulation method can be seen as first building an upper bound (the total energy) for the target objective function (the potential energy), and then minimizing the upper bound. In this case, the upper bound eventually becomes tight.


\renewcommand{\mpwthree}{6cm}
\renewcommand{\myhspace}{\hspace{-3mm}}

\begin{figure}[t]
	\begin{center}
	\begin{minipage}{\textwidth}
	\begin{tabular}{cc}%
			\begin{minipage}{\mpwthree}%
			\centering%
			\includegraphics[width=\columnwidth]{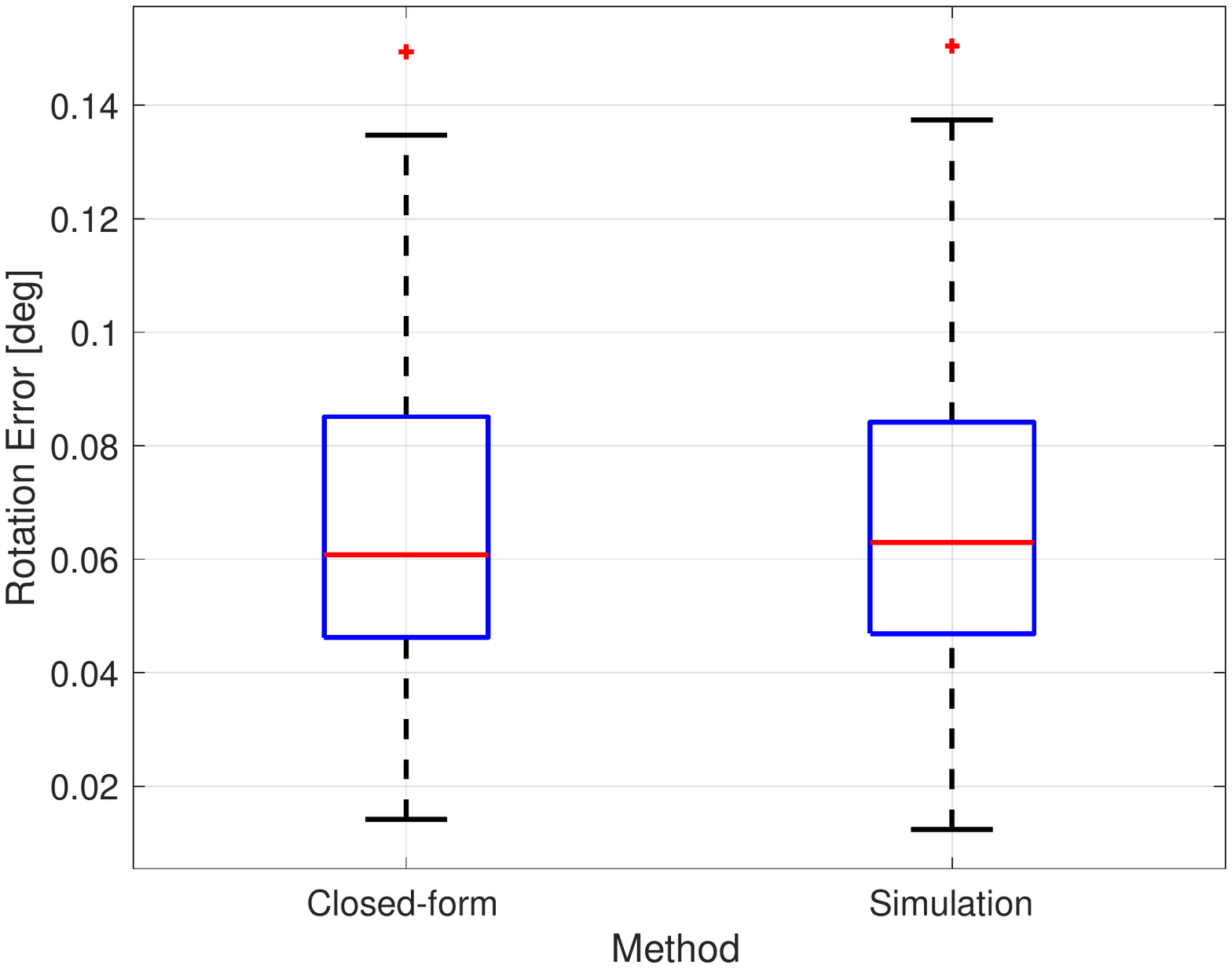} \\
			{\small (a) Rotation estimation error.}
			\end{minipage}
		& \hspace{10mm}
			\begin{minipage}{\mpwthree}%
			\centering%
			\includegraphics[width=\columnwidth]{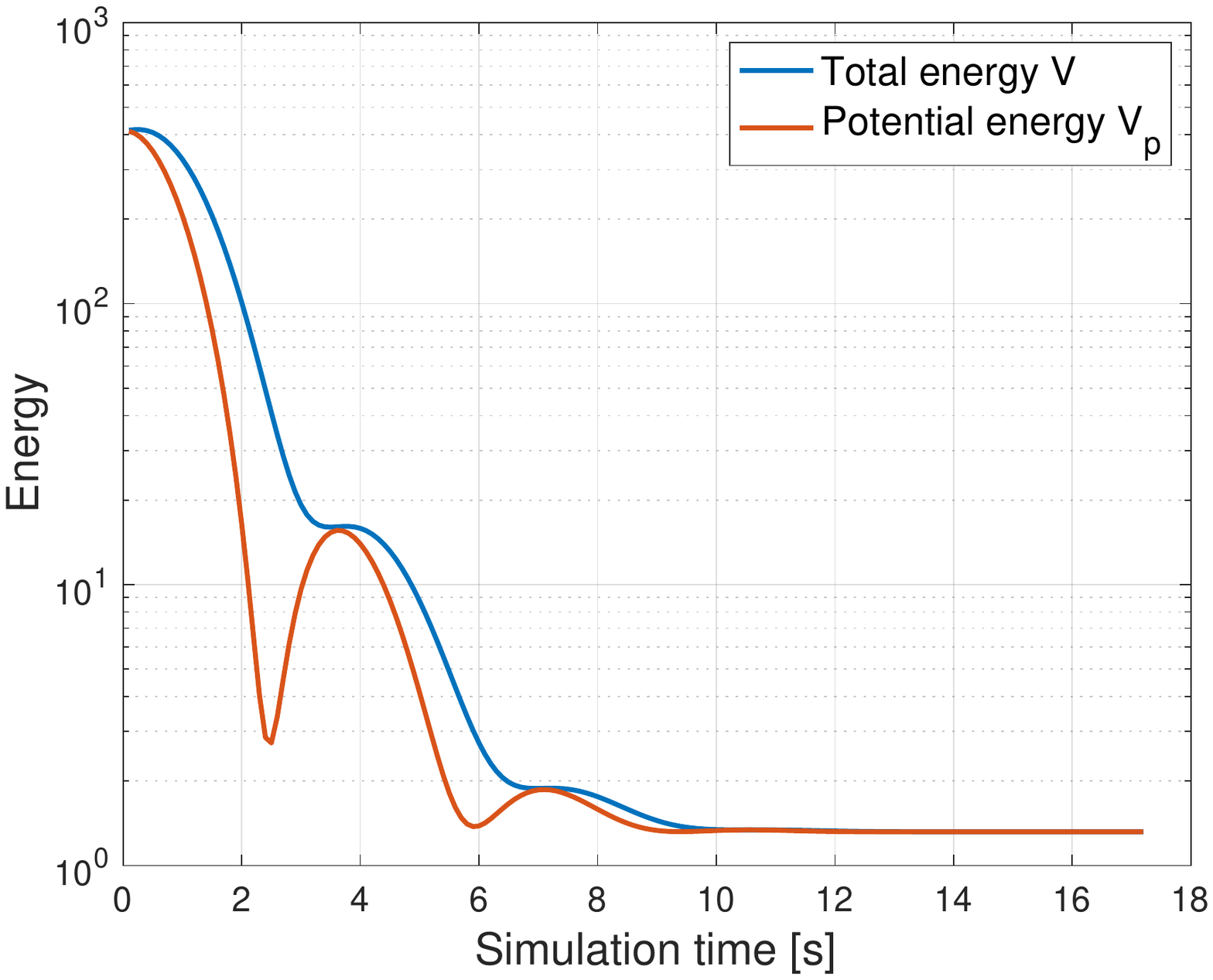} \\
			{\small (b) Convergence of $V$ and $V_p$.}
			\end{minipage}
	\end{tabular}
	\end{minipage} 
	\caption{Simulation of the dynamics. (a) Boxplot of rotation estimation errors over 100 Monte Carlo runs, using both the closed-form solution~\cite{Horn87josa} and the simulation of dynamics~\eqref{eq:dynamics}. (b) An example convergence plot of the total energy $V$ and the potential energy $V_p$ \wrt simulation time.
	\label{fig:simulation}} 
	\vspace{-5mm} 
	\end{center}
\end{figure}

\section{Conclusions}
\label{sec:conclusions}
In conclusion, we provide the first theoretical understanding of using dynamics-based simulation to solve point cloud registration. We treat point clouds as rigid bodies that contain a finite collection of particles and we treat registration as the process of one point cloud moving towards the other point cloud under forces and torques induced by virtual springs and viscous damping. We show that the total potential energy of the dynamical system recovers the objective function in maximum likelihood estimation. We then leverage the invariant set theorem to show that the system converges to the set of equilibrium points, of which one equilibrium point corresponds to the global optimizer of the maximum likelihood estimation. Further, supported by numerical experiments, we conjecture that, besides the globally optimal equilibrium point, the system contains either three or infinite spurious equilibrium points, and all spurious equilibria are locally unstable. This suggests that running the simulation can always obtain the global optimizer. Future work includes extending the analysis to the case where correspondences are corrupted by outliers.

\section*{Acknowledgments}
The authors would like to thank Prof. Jean-Jacques Slotine for fruitful discussions.



{\small
\bibliographystyle{ieee_fullname}
\bibliography{../../references/refs.bib,myRefs.bib}
}

\end{document}